%% file: example_paper.tex
\documentclass{article}

\usepackage{microtype}
\usepackage{graphicx}
\usepackage{subcaption}
\usepackage{booktabs} % for professional tables
\usepackage{multirow} % For combining rows (Task names)

\usepackage{hyperref}

\usepackage[preprint]{icml2026}

\usepackage{amsmath}
\usepackage{amssymb}
\usepackage{mathtools}
\usepackage{amsthm}
\usepackage{tikz}
\usetikzlibrary{shapes, arrows.meta, positioning, shadows, fit, backgrounds, calc}
\usepackage{xcolor}
\definecolor{safegreen}{RGB}{0, 150, 0}   % Darker green for readability
\definecolor{unsafegray}{gray}{0.6}       % Light gray for violations

\newcommand{\best}[1]{\textcolor{safegreen}{\textbf{#1}}} % Green + Bold (Optimal Safe)
\newcommand{\bad}[1]{\textcolor{unsafegray}{#1}}         % Gray (Unsafe)

\usepackage[capitalize,noabbrev]{cleveref}


\theoremstyle{plain}
\newtheorem{theorem}{Theorem}[section]

\newtheorem{lemma}[theorem]{Lemma}
\newtheorem{corollary}[theorem]{Corollary}
\theoremstyle{definition}

\theoremstyle{remark}

\usepackage[textsize=tiny]{todonotes}

\icmltitlerunning{Submission and Formatting Instructions for ICML 2026}

\begin{document}

\twocolumn[
  \icmltitle{Safe Langevin Soft Actor Critic}

  % It is OKAY to include author information, even for blind submissions: the
  % style file will automatically remove it for you unless you've provided
  % the [accepted] option to the icml2026 package.

  % List of affiliations: The first argument should be a (short) identifier you
  % will use later to specify author affiliations Academic affiliations
  % should list Department, University, City, Region, Country Industry
  % affiliations should list Company, City, Region, Country

  % You can specify symbols, otherwise they are numbered in order. Ideally, you
  % should not use this facility. Affiliations will be numbered in order of
  % appearance and this is the preferred way.
  \icmlsetsymbol{equal}{*}

  \begin{icmlauthorlist}
    \icmlauthor{Mahesh Keswani}{equal,yyy}
    \icmlauthor{Samyak Jain}{comp}
    \icmlauthor{Raunak P. Bhattacharyya}{yyy}
    %\icmlauthor{}{sch}
    %\icmlauthor{}{sch}
  \end{icmlauthorlist}

  \icmlaffiliation{yyy}{Yardi School of Artificial Intelligence, IIT Delhi, India}
  \icmlaffiliation{comp}{Department of Mathematics, IIT Delhi, India}

  \icmlcorrespondingauthor{Mahesh Keswani}{aiy247544@iitd.ac.in}
  \icmlcorrespondingauthor{Samyak Jain}{mt1221658@maths.iitd.ac.in}
  \icmlcorrespondingauthor{Raunak P. Bhattacharyya}{raunakbh@iitd.ac.in}

  % You may provide any keywords that you find helpful for describing your
  % paper; these are used to populate the "keywords" metadata in the PDF but
  % will not be shown in the document
  \icmlkeywords{Machine Learning, ICML}

  \vskip 0.3in
]

% this must go after the closing bracket ] following \twocolumn[ ...

% This command actually creates the footnote in the first column listing the
% affiliations and the copyright notice. The command takes one argument, which
% is text to display at the start of the footnote. The \icmlEqualContribution
% command is standard text for equal contribution. Remove it (just {}) if you
% do not need this facility.

% Use ONE of the following lines. DO NOT remove the command.
% If you have no special notice, KEEP empty braces:
\printAffiliationsAndNotice{}  % no special notice (required even if empty)
% Or, if applicable, use the standard equal contribution text:
% \printAffiliationsAndNotice{\icmlEqualContribution}

\begin{abstract}
  % Safe reinforcement learning (SafeRL) is essential for deploying autonomous agents in safety-critical applications such as robotics and autonomous driving. However, existing SafeRL methods face several challenges such as value-based approaches often converge to sharp local minima leading to brittle policies, Lagrangian methods that optimize for average constraint satisfaction may fail to account for tail risks, and single-critic architectures provide limited robustness in value estimation. We propose ABC, a SafeRL algorithm that addresses these limitations through three key contributions. First, we employ an ensemble of bootstrapped twin reward critics trained with adaptive Stochastic Gradient Langevin Dynamics (aSGLD) to encourage exploration of flat minima and improve generalization. Second, we model the cost return distribution and optimize a Conditional Value-at-Risk (CVaR) constrained objective to explicitly mitigate tail risks. Third, we adapt the Lagrange multiplier based on the CVaR of recent episodic costs rather than their mean, ensuring conservative constraint enforcement. Experiments on Safety-Gymnasium continuous control tasks demonstrate that ABC achieves competitive or superior performance compared to state-of-the-art off-policy SafeRL baselines while maintaining robust safety across all environments.

      Balancing reward and safety in constrained reinforcement learning remains challenging due to poor generalization from sharp value minima and inadequate handling of heavy-tailed risk distribution. We introduce Safe Langevin Soft Actor-Critic (SL-SAC), a principled algorithm that addresses both issues through parameter-space exploration and distributional risk control. Our approach combines three key mechanisms: (1) Adaptive Stochastic Gradient Langevin Dynamics (aSGLD) for reward critics, promoting ensemble diversity and escape from poor optima; (2) distributional cost estimation via Implicit Quantile Networks (IQN) with Conditional Value-at-Risk (CVaR) optimization for tail-risk mitigation; and (3) a reactive Lagrangian relaxation scheme that adapts constraint enforcement based on the empirical CVaR of episodic costs. We provide theoretical guarantees on CVaR estimation error and demonstrate that CVaR-based Lagrange updates yield stronger constraint violation signals than expected-cost updates. On Safety-Gymnasium benchmarks, SL-SAC achieves the lowest cost in 7 out of 10 tasks while maintaining competitive returns, with cost reductions of 19-63\% in velocity tasks compared to state-of-the-art baselines.
\end{abstract}

\input{introduction}
\input{preliminaries}
\input{method}
\input{experiments}
\input{related_work}
\input{conclusion}

\section*{Impact Statement}
This paper introduces SL-SAC, a framework designed to improve constraint satisfaction in Safe Reinforcement Learning. By incorporating distributional cost modeling and Langevin dynamics, our work aims to mitigate tail risks and enhance value estimation robustness. These contributions are relevant for developing reliable agents in safety-critical domains such as autonomous driving and robotics. There are no specific negative societal consequences we feel must be highlighted here

% In the unusual situation where you want a paper to appear in the
% references without citing it in the main text, use \nocite
% \nocite{langley00}

\bibliography{references}
\bibliographystyle{icml2026}

%%%%%%%%%%%%%%%%%%%%%%%%%%%%%%%%%%%%%%%%%%%%%%%%%%%%%%%%%%%%%%%%%%%%%%%%%%%%%%%
%%%%%%%%%%%%%%%%%%%%%%%%%%%%%%%%%%%%%%%%%%%%%%%%%%%%%%%%%%%%%%%%%%%%%%%%%%%%%%%
% APPENDIX
%%%%%%%%%%%%%%%%%%%%%%%%%%%%%%%%%%%%%%%%%%%%%%%%%%%%%%%%%%%%%%%%%%%%%%%%%%%%%%%
%%%%%%%%%%%%%%%%%%%%%%%%%%%%%%%%%%%%%%%%%%%%%%%%%%%%%%%%%%%%%%%%%%%%%%%%%%%%%%%
\newpage
\appendix
\onecolumn
\input{appendix}

\end{document}

%% file: introduction.tex
\section{Introduction}

Reinforcement learning (RL) has achieved remarkable success across diverse domains, from mastering complex games~\citep{silver2016mastering, mnih2013playing} to controlling robotic systems~\citep{levine2016end, dmitry2018qt}. However, the trial-and-error nature of RL poses significant challenges for safety-critical applications such as autonomous driving, healthcare, and industrial control, where constraint violations can lead to catastrophic consequences. Safe reinforcement learning (SafeRL) addresses this challenge by incorporating explicit safety constraints into the learning process, enabling agents to maximize cumulative rewards while satisfying cost constraints that encode safety requirements~\citep{achiam2017constrained}.

Despite significant progress in SafeRL, achieving robust constraint satisfaction in stochastic environments remains an open challenge due to two fundamental limitations. First, the predominant reliance on expected cost constraints fails to account for tail risks; by averaging costs over trajectories, these methods effectively mask rare but catastrophic failures, leaving agents vulnerable to worst-case scenarios~\citep{achiam2017constrained, ray2019benchmarking}. Second, standard off-policy algorithms are limited by the brittleness of value estimation. Relying on deterministic point estimates neglects the epistemic uncertainty required for safe exploration, while conventional first-order optimizers (e.g., Adam~\citep{adam2014method}) tend to converge to sharp local minima~\citep{ishfaq2025langevin}. 

% Despite significant progress in SafeRL, several challenges remain in ensuring robust constraint satisfaction in practice. First, many approaches rely on expected cost estimates for constraint enforcement, which may underestimate tail risks when rare but severe constraint violations are averaged out by frequent low-cost episodes~\citep{achiam2017constrained, ray2019benchmarking}. Second, standard actor-critic methods often employ a single reward critic, leading to high-variance off-policy value estimates that can be mitigated by ensemble and twin-critic architectures, while widely used gradient-based optimizers such as Adam tend to converge to sharp minima that limit the generalization of learned Q-values~\citep{chen2021randomized, osband2016deep, fujimoto2018addressing, adam2014method}.

To address these challenges, we propose Safe Langevin Soft Actor Critic (SL-SAC), a SafeRL algorithm that integrates distributional cost modeling, ensemble-based reward value estimation with parameter-space exploration, and risk-sensitive constraint enforcement. SL-SAC tackles tail-risk underestimation by employing a distributional cost critic trained via Implicit Quantile Networks (IQN)~\citep{dabney2018implicit}, enabling Conditional Value-at-Risk (CVaR)~\citep{rockafellar2000optimization} based constraint enforcement that explicitly accounts for worst-case cost scenarios. Second, for robust reward estimation, SL-SAC uses an ensemble of twin critics~\citep{fujimoto2018addressing} optimized via adaptive Stochastic Gradient Langevin Dynamics (aSGLD)~\citep{ishfaq2025langevin}. The aSGLD optimizer injects noise into gradient updates, encouraging parameter diversity and helping the ensemble escape sharp minima, which improves generalization without requiring full Bayesian inference. In addition, SL-SAC introduces a risk-sensitive Lagrangian that dynamically adjusts constraint enforcement based on the CVaR of recent episodic costs.

Theoretically, we establish two key results that justify our design choices: (1) We bound the CVaR estimation error in terms of the quantile regression error, demonstrating that accurate quantile estimation directly translates to reliable tail-risk assessment; (2) We prove that CVaR-based Lagrange updates provide stronger constraint violation signals than expected-cost updates, leading to tighter control over tail risks. We evaluate SL-SAC on the Safety-Gymnasium benchmark~\citep{ji2023safety} and show it consistently outperforms state-of-the-art baselines, achieving superior reward-cost trade-offs across diverse continuous-control tasks (Table~\ref{tab:main_results}). To further assess generalization, we test SL-SAC on autonomous driving in the MetaDrive simulator~\citep{li2022metadrive}, where it achieves lower costs and superior returns than baselines in stochastic traffic environments. Ablation studies validate that both distributional cost modeling and parameter-space exploration via aSGLD are crucial for robust safety and reward value estimation.

%% file: preliminaries.tex
\section{Background}

\subsection{Constrained Markov Decision Processes (CMDPs)}

We formulate the SafeRL problem as a Constrained Markov Decision Process (CMDP)~\citep{altman2021constrained}, which extends the standard MDP framework by introducing explicit safety constraints. A CMDP is defined by the tuple $(\mathcal{S}, \mathcal{A}, P, r, c, d_0, \gamma, \beta)$, where $\mathcal{S}$ and $\mathcal{A}$ denote the state and action spaces, $P: \mathcal{S} \times \mathcal{A} \to \Delta(\mathcal{S})$ represents the transition dynamics, $d_0$ is the initial state distribution, and $\gamma \in [0,1)$ is the discount factor. At each timestep, the agent receives both a reward $r(s,a)$ to maximize and a cost $c(s,a)$ to constrain.

The objective is to find a policy $\pi^*$ that maximizes the expected discounted return $J(\pi) = \mathbb{E}_{\tau \sim \pi}[\sum_{t=0}^\infty \gamma^t r(s_t, a_t)]$ while ensuring the expected cumulative cost $J_C(\pi) = \mathbb{E}_{\tau \sim \pi}[\sum_{t=0}^\infty \gamma^t c(s_t, a_t)]$ remains below safety threshold $\beta$. While standard CMDPs define $J_C(\pi)$ as the expected cost, risk-constrained formulations may replace this with risk measures such as CVaR to account for tail events:
\begin{equation*}
    \pi^* = \operatorname*{argmax}_{\pi} J(\pi) \quad \text{s.t.} \quad J_C(\pi) \leq \beta,
\end{equation*}
where $\tau = (s_0, a_0, s_1, a_1, \ldots)$ denotes a trajectory with $s_0 \sim d_0$ and $a_t \sim \pi(\cdot|s_t)$. 

\subsection{Maximum Entropy Reinforcement Learning}

To enhance exploration in constrained optimization, we adopt the Maximum Entropy RL framework~\citep{ziebart2010modeling, haarnoja2018soft}. Unlike standard RL that maximizes expected reward alone, MaxEnt RL augments the standard objective with entropy regularization to encourage broad exploration, simultaneously maximizing expected return and policy entropy:
\begin{equation*}
    J_{\text{MaxEnt}}(\pi) = \mathbb{E}_{\tau \sim \pi} \bigg[ \sum_{t=0}^\infty \gamma^t \Big( r(s_t, a_t) + \alpha \mathcal{H}(\pi(\cdot | s_t)) \Big) \bigg],
\end{equation*}
where $\alpha > 0$ is the temperature parameter regulating the stochasticity of the policy, and $\mathcal{H}(\pi(\cdot|s)) = -\mathbb{E}_{a \sim \pi(\cdot|s)}[\log \pi(a|s)]$ is the entropy of the policy.

By combining the CMDP formulation with the MaxEnt objective, we can employ Lagrangian relaxation to convert the constrained problem into an unconstrained max-min formulation. This formulation balances reward maximization, entropy-induced exploration, and constraint satisfaction:
\begin{equation*}
    \max_{\pi} \min_{\lambda \geq 0} \mathcal{L}(\pi, \lambda) = J_{\text{MaxEnt}}(\pi) - \lambda (J_C(\pi) - \beta),
\end{equation*}
where $\lambda$ is the Lagrange multiplier enforcing the cost constraint. $\lambda$ is adjusted via projected gradient ascent to penalize constraint violations: 
\begin{equation*}
    \lambda \leftarrow \max(0, \lambda + \eta_\lambda (J_C(\pi) - \beta)),
\end{equation*}
where $\eta_\lambda > 0$ is the learning rate for the multiplier.

\begin{figure*}[t]
    \centering
    \includegraphics[width=\linewidth]{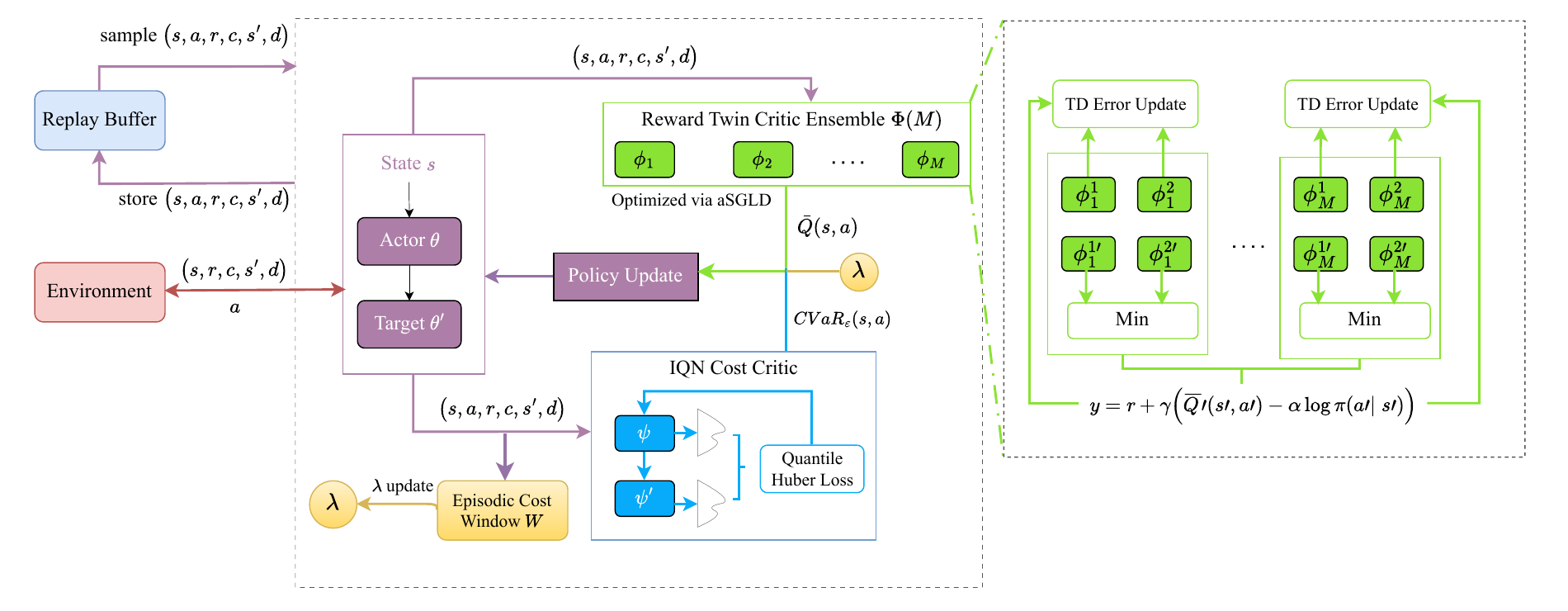}
    \caption{SL-SAC architecture overview. The policy interacts with the environment, storing transitions in replay memory. An ensemble of reward critics is trained via aSGLD, while a distributional cost critic learns quantiles via IQN. The policy optimizes a Lagrangian objective balancing reward, entropy, and CVaR-based safety, with $\lambda$ updated using empirical CVaR of recent episodes.}
    \label{fig:abc_arch}
\end{figure*}

\subsection{Langevin Monte Carlo}

While MaxEnt RL addresses the stochasticity of the policy, reliable value estimation in SafeRL requires avoiding convergence to sharp local minima that can lead to poor generalization. The Langevin Monte Carlo (LMC), a class of Markov Chain Monte Carlo (MCMC) sampling algorithms, addresses this by injecting noise into gradient updates, promoting exploration in parameter space and helping escape suboptimal regions. LMC targets distributions of the form:
\begin{equation*}
    p^*(\theta) \propto \exp\left(-\frac{1}{T} U(\theta)\right),
\end{equation*}
where \( U: \mathbb{R}^n \to \mathbb{R} \) is a continuously differentiable energy function, \( \theta \in \mathbb{R}^n \) are model parameters, and \( T > 0 \) is a temperature parameter~\citep{roberts1996exponential}. The classical LMC algorithm is based on the discretization of the overdamped Langevin diffusion process~\citep{rossky1978brownian, roberts2002langevin}:
\begin{equation*}
    d\theta_t = -\nabla U(\theta_t)\,dt + \sqrt{2T}\,dB_t,
\end{equation*}
where \( B_t \) is standard Brownian motion in \( \mathbb{R}^n \). Under appropriate conditions on \( U \), this process admits \( p^*(\theta) \) as its stationary distribution.

In the context of deep learning, the continuous-time dynamics are discretized using the Euler–Maruyama scheme. When the exact gradient \( \nabla U(\theta) \) is replaced by a stochastic mini-batch estimate, the method becomes Stochastic Gradient Langevin Dynamics (SGLD)~\citep{welling2011bayesian}:
\begin{equation*}
    \theta_{k+1} = \theta_k - \eta \tilde{\nabla} U(\theta_k) + \sqrt{2T\eta} \, \xi_k,
\end{equation*}
where \( \eta > 0 \) is the step size and \( \xi_k \sim \mathcal{N}(0, I_n) \) is injected Gaussian noise.

In Deep RL, SGLD enables Bayesian posterior approximation of value functions rather than relying on point estimates. Its noise injection facilitates escaping sharp minima to improve generalization~\citep{ishfaq2025langevin}, while driving parameter diversity across ensembles to yield robust uncertainty estimates and mitigate mode collapse.

% \subsection{Conditional Value-at-Risk}

% Conditional Value-at-Risk (CVaR), also known as Expected Shortfall, is a coherent risk measure that quantifies tail risk in a distribution~\citep{rockafellar2000optimization}. For a random variable $Z$ with cumulative distribution function $F_Z$, the CVaR at confidence level $\epsilon \in (0,1]$ is defined as:
% \begin{equation}
%     \text{CVaR}_\epsilon[Z] = \mathbb{E}\left[Z \mid Z \geq F^{-1}_Z(1-\epsilon)\right] = \frac{1}{\epsilon}\int_{1-\epsilon}^{1} F^{-1}_Z(\tau) \, d\tau,
%     \label{eq:cvar_definition}
% \end{equation}
% where $F^{-1}_Z$ is the quantile function. Unlike the expectation, which treats all outcomes equally, CVaR focuses on the worst $\epsilon$-fraction of the distribution. Smaller values of $\epsilon$ correspond to more risk-averse behavior by emphasizing rarer, more severe outcomes, while $\epsilon = 1$ reduces to the expected value (risk-neutral case).

% In SafeRL, CVaR can be used to enforce constraints on the tail behavior of the cost return distribution rather than its mean. Given a cost return distribution $Z^C(s,a)$ and risk level $\epsilon$, a CVaR-based safety constraint requires:
% \begin{equation}
%     \text{CVaR}_\epsilon[Z^C_\pi] \leq \beta,
%     \label{eq:cvar_constraint}
% \end{equation}
% where $Z^C_\pi$ denotes the cost return under policy $\pi$. 

\subsection{Conditional Value-at-Risk (CVaR)}

Conditional Value-at-Risk (CVaR), also known as Expected Shortfall, is a coherent risk measure that quantifies the tail risk of a distribution. For a random variable $Z$ with cumulative distribution function $F_Z$, the CVaR at confidence level $\epsilon \in (0,1)$ is defined as the expected value of the worst-case $\epsilon$-portion of outcomes:
\begin{equation*}
    \text{CVaR}_\epsilon[Z] = \mathbb{E}\left[Z \mid Z \geq F^{-1}_Z(1-\epsilon)\right] = \frac{1}{\epsilon}\int_{1-\epsilon}^{1} F^{-1}_Z(\tau) \, d\tau,
    \label{eq:cvar_definition}
\end{equation*}
where $F^{-1}_Z$ is the quantile function. Unlike the expected value, which averages over the entire distribution, CVaR focuses exclusively on the tail, making it sensitive to extreme outcomes and heavy tails. 

In SafeRL, CVaR can be used to enforce constraints on the tail behavior of the cost return distribution rather than its expected value. Given a cost return distribution $Z^C(s,a)$ and risk level $\epsilon$, a CVaR-based safety constraint requires:
\begin{equation*}
    \text{CVaR}_\epsilon[Z^C_\pi] \leq \beta,
    \label{eq:cvar_constraint}
\end{equation*}
where $Z^C_\pi$ denotes the cost return under policy $\pi$. 

% In SafeRL, CVaR is commonly employed to enforce constraints on the tail of the cost distribution $Z^C_\pi$, ensuring safety even in worst-case scenarios:
% \begin{equation}
%     \text{CVaR}_\eps[Z^C_\pi] \leq \beta.
%     \label{eq:cvar_constraint}
% \end{equation}

% However, optimizing or constraining CVaR directly is computationally challenging. Standard sample-based estimators often rely on discarding the $(1-\alpha)$-portion of "safe" or "high-return" trajectories to isolate the tail, resulting in severe **sample inefficiency**~\citep{mead2025return}. As noted by \citet{mead2025return}, this approach also suffers from \textit{"blindness to success"}, where the optimizer fails to learn from non-tail trajectories that successfully avoid risks. While recent works like Return Capping propose modifying the objective to utilize all data~\citep{mead2025return}, in this work we address these estimation challenges by employing a distributional critic that learns the full quantile function of the costs, allowing for sample-efficient and differentiable CVaR estimation.

%% file: method.tex
\section{Methodology}
In this section, we propose the SL-SAC algorithm. Our approach addresses the challenges of safe exploration and robust constraint satisfaction by integrating three synergistic components: (1) ensemble-based reward estimation optimized via aSGLD, which encourages parameter diversity and helps escape sharp minima in the value landscape; (2) distributional cost assessment employing IQN to enforce CVaR-based safety limits; and (3) a reactive Lagrangian relaxation scheme that adapts constraint enforcement based on empirical risk realization. We detail each component and their integration below.

\begin{algorithm}[t]
\caption{Safe Langevin Soft Actor Critic (SL-SAC)}
\label{alg:abc}
\begin{algorithmic}[1]
\STATE \textbf{Input:} Cost limit $\beta$, ensemble size $M$, CVaR level $\epsilon$, steps per epoch $T_{\text{end}}$
\STATE \textbf{Initialize:} Reward critics $\{Q^{(i)}_\phi\}_{i=1}^{2M}$, cost critic $Z^C_\psi$, policy $\pi_\theta$, Multiplier $\lambda$, replay buffer $\mathcal{D}$, episode cost window $\mathcal{W}$, episode cost $C_{\text{ep}} \leftarrow 0$
\STATE \textbf{Initialize:} Target networks $\phi' \leftarrow \phi$, $\psi' \leftarrow \psi$, $\theta' \leftarrow \theta$
\FOR{epoch $= 1, 2, \ldots$}
    \FOR{$t = 1, 2, \ldots, T_{\text{end}}$}
        \STATE Sample action $a_t \sim \pi_\theta(\cdot|s_t)$, observe $r_t, c_t, s_{t+1}, d_t$
        \STATE Store $(s_t, a_t, r_t, c_t, s_{t+1}, d_t)$ in $\mathcal{D}$
        \STATE $C_{\text{ep}} \leftarrow C_{\text{ep}} + c_t$ 
        \IF{$d_t$} 
            \STATE Store $C_{\text{ep}}$ in $\mathcal{W}$, $C_{\text{ep}} \leftarrow 0$
        \ENDIF
        \STATE Sample minibatch $B$ from $\mathcal{D}$
        \STATE Update $\{Q^{(i)}_\phi\}$ using Algorithm~\ref{alg:asgld_update}
        \STATE Sample quantiles $\tau, \tau' \sim \mathcal{U}[0,1]$
        \STATE Update $Z^C_\psi$ via quantile regression (Eq.~\ref{eq:iqn_loss})
        \STATE Estimate $\widehat{\mathrm{CVaR}}_\epsilon(s,a)$ (Eq.~\ref{eq:cvar_estimate})
        \IF{$t \bmod 2 = 0$}
            \STATE Update $\pi_\theta$ via Eq.~\ref{eq:policy_objective}
            \STATE Soft update all target networks with factor $\tau_{\text{soft}}$
        \ENDIF
        \STATE Update multiplier $\lambda$ (Eq.~\ref{eq:lambda_update})
    \ENDFOR
\ENDFOR
\end{algorithmic}
\end{algorithm}

\subsection{Reward Value Estimation}

SL-SAC employs an ensemble of $M$ twin-critic pairs, yielding $2M$ total reward networks $\{Q^{(i)}_\phi\}_{i=1}^{2M}$. Each pair consists of two independently initialized critics $(Q^{(2m-1)}_\phi, Q^{(2m)}_\phi)$ for $m=1,\dots,M$. Diversity among ensemble members arises from both random initialization and the stochastic optimization dynamics of aSGLD.

\textbf{Ensemble Training Objective.} Each critic minimizes the Bellman error over transitions $(s, a, r, s', d)$ sampled from the replay buffer $\mathcal{D}$. The loss for the $i$-th critic is:
\begin{equation}
    \mathcal{L}^{\text{reward}}_i(\phi) = \mathbb{E}_{(s,a,r,s') \sim \mathcal{D}} \left[ \left(Q^{(i)}_\phi(s,a) - y^{\text{reward}}\right)^2 \right],
    \label{eq:reward_critic_loss}
\end{equation}
with target value:
\begin{equation}
\begin{split}
    y^{\text{reward}} = r &+ \gamma (1 - d) \Bigg( \frac{1}{M}\sum_{m=1}^M \min_{j \in \{2m-1, 2m\}} Q^{(j)}_{\phi'}(s', a') \\
    &- \alpha \log \pi_{\theta'}(a'|s') \Bigg),
    \label{eq:reward_target}
\end{split}
\end{equation}
where $a' \sim \pi_{\theta'}(\cdot|s')$, $\phi'$ and $\theta'$ are target parameters. 

\textbf{Optimization via aSGLD.} Standard gradient-based optimizers such as Adam tend to converge to sharp minima in the loss landscape, which can lead to overfitting and poor generalization in value function approximation~\citep{ishfaq2025langevin}. To promote convergence to flatter minima and enhance diversity across our reward critic ensemble, we employ aSGLD. Specifically, we utilize an update rule that combines Adam-style adaptive drift for fast convergence with isotropic Gaussian noise for robust exploration.

We update the critic parameters $\phi$ using the following rule:
\begin{equation}
\begin{split}
    \phi_{k+1} \leftarrow \phi_k &- \eta \left( \nabla_\phi \mathcal{L}^{\text{reward}}(\phi_k) + a \zeta_{\phi_k} \right) \\
    &+ \sqrt{2\eta T^{-1}} \, \xi_k, \quad \text{where } \xi_k \sim \mathcal{N}(0, I_d),
\end{split}
\label{eq:asgld_update}
\end{equation}
where $\eta$ is the learning rate, $T^{-1}$ is the inverse temperature parameter, and $\xi_k$ is standard isotropic Gaussian noise. The term $\zeta_{\phi_k}$ represents the adaptive preconditioner derived from the first and second gradient moments (similar to Adam), and $a$ is a bias factor. By injecting noise into the optimization trajectory, aSGLD facilitates the escape from sharp local minima. Concurrently, the independent stochastic updates of the aSGLD optimizer induce parameter diversity within the ensemble, counteracting the tendency of the critics to converge to identical weights despite the shared regression targets. The adaptive term $\zeta_{\phi_k}$ is defined as $\zeta_{\phi_k} = m_k \oslash \sqrt{v_k + \varepsilon I}$, where $m_k$ and $v_k$ are exponential moving averages of the gradient and its square, respectively. This process, which can be interpreted through a probabilistic lens (see Appendix~\ref{sec:probabilistic_critic}), encourages parameter diversity within the ensemble, leading to more robust value estimates.

\begin{algorithm}[b]
   \caption{Reward Update via aSGLD}
   \label{alg:asgld_update}
\begin{algorithmic}[1]
   \STATE {\bfseries Input:} Critic ensemble $\{Q_{\phi_i}\}_{i=1}^{2M}$, Batch $\mathcal{B}$.
   \STATE {\bfseries Hyperparameters:} Learning rate $\eta$, Bias factor $a$, Inverse temperature $T^{-1}$.
   
   \STATE Compute Bellman targets $y$ for batch $\mathcal{B}$ using target ensemble (Eq.~\ref{eq:reward_target}).
   
   \FOR{each critic $i \in \{1, \dots, 2M\}$}
       \STATE Compute gradient $g_i = \nabla_{\phi_i} \mathcal{L}^{\text{reward}}(\phi_i; \mathcal{B})$ (Eq.~\ref{eq:reward_critic_loss}).
       
       \STATE Update optimizer moments $m_i, v_i$ using gradients $g_i$.
       \STATE Compute adaptive preconditioner $\zeta_i = m_i \oslash (\sqrt{v_i + \varepsilon I})$.
       
       \STATE Sample isotropic noise $\xi_i \sim \mathcal{N}(0, I)$.
       
       \STATE $\phi_i \leftarrow \phi_i - \eta (g_i + a \cdot \zeta_i) + \sqrt{2\eta T^{-1}} \, \xi_i$
   \ENDFOR
\end{algorithmic}
\end{algorithm}

\begin{figure*}[t]
    \centering
    \includegraphics[width=\linewidth]{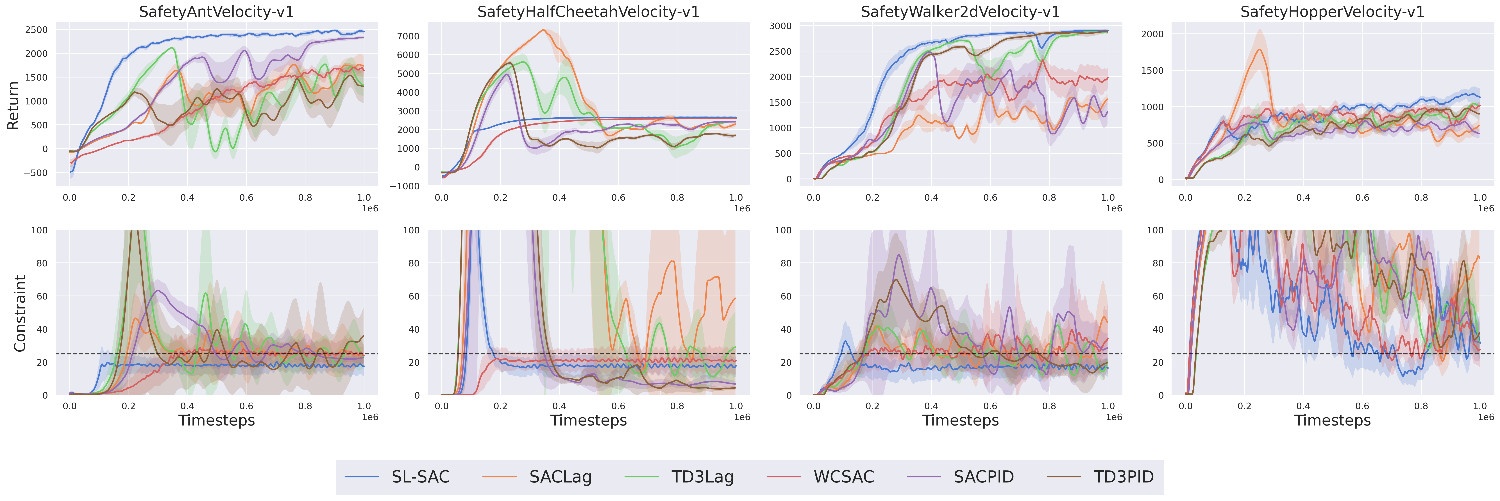}
    \caption{Training curves for safety-mujoco velocity tasks. Episode return (left) and cost (right), averaged over 5 seeds (solid line: mean, shaded area: standard deviation). The dashed line indicates the constraint threshold $\beta=25$.}
    \label{fig:main_velocity_results}
\end{figure*}

\subsection{Risk Assessment}

Ensuring reliable constraint satisfaction in stochastic environments necessitates the quantification of tail risks rather than expected costs. To this end, we employ a Distributional Critic based on IQN to model the full distribution of cost returns $Z^C(s,a)$. Unlike the reward ensemble, we use a single cost critic optimized via AdamW~\citep{loshchilov2017decoupled} to maintain  quantile estimates. As demonstrated in our ablation study (Appendix~\ref{app:cost_optimizer_ablation}), applying aSGLD to the distributional cost critic destabilizes the quantile regression process, leading to inaccurate CVaR estimates and constraint violations. AdamW provides the necessary stability for precise tail-risk modeling.

The cost critic $Z^C_\psi$ learns the quantile function $F^{-1}_{Z^C}(\tau)$ by minimizing the quantile Huber loss. For a transition $(s,a,c,s',d)$ and independently sampled quantiles $\tau, \tau' \sim \mathcal{U}[0,1]$, the temporal difference error is:
\begin{equation}
    \delta_{\tau,\tau'} = c + \gamma_C (1-d) Z^{C,\tau'}_{\psi'}(s', a') - Z^{C,\tau}_\psi(s,a),
\end{equation}
The quantile Huber loss is:
\begin{equation}
    \rho_\kappa^\tau(\delta) = |\tau - \mathbb{I}_{\{\delta < 0\}}| \cdot \mathcal{L}_\kappa(\delta),
\end{equation}
with Huber loss $\mathcal{L}_\kappa(\delta)$ having threshold $\kappa$:
\begin{equation}
    \mathcal{L}_\kappa(\delta) = \begin{cases} 
        \frac{1}{2} \delta^2, & \text{if } |\delta| \le \kappa \\
        \kappa (|\delta| - \frac{1}{2} \kappa), & \text{otherwise}.
    \end{cases}
\end{equation}
The overall cost critic loss is:
\begin{equation}
    \mathcal{L}^{\text{cost}}(\psi) = \mathbb{E}_{(s,a,c,s',d) \sim \mathcal{D}, \tau, \tau' \sim \mathcal{U}[0,1]} \left[ \rho_\kappa^\tau \left( \delta_{\tau,\tau'} \right) \right].
    \label{eq:iqn_loss}
\end{equation}

Using the learned distribution, we estimate CVaR at level $\epsilon$ by averaging $N$ quantiles from the upper tail:
\begin{equation}
    \widehat{\text{CVaR}}_\epsilon(s,a) \approx \frac{1}{N} \sum_{k=1}^N Z^{C,\tau_k}_\psi(s,a), \quad \tau_k \sim \mathcal{U}[1-\epsilon, 1].
    \label{eq:cvar_estimate}
\end{equation}

\subsection{Policy Optimization}

The policy $\pi_\theta$ is trained to maximize expected reward and entropy while satisfying a CVaR-based safety constraint, formulated as a Lagrangian optimization problem (see Algorithm~\ref{alg:abc} and Figure~\ref{fig:abc_arch}):
\begin{equation}
    \begin{aligned}
    \max_\theta \, \mathbb{E}_{s \sim \mathcal{D}, a \sim \pi_\theta(\cdot|s)} \Big[ & \bar{Q}_{\phi}(s,a) - \alpha \log \pi_\theta(a|s) \\
    & - \lambda \cdot \widehat{\text{CVaR}}_\epsilon[Z^C_{\psi}(s,a)] \Big],
    \end{aligned}
    \label{eq:policy_objective}
\end{equation}
where $\bar{Q}_{\phi}(s, a)$ is the average across the $M$ twin-critic pairs of the minimum Q-value, and $\widehat{\text{CVaR}}_\epsilon$ is estimated via Eq.~\ref{eq:cvar_estimate}. 

The multiplier $\lambda$ is adapted based on the agent's empirical safety performance. We maintain a sliding window $\mathcal{W}$ of the most recent episodic costs. The multiplier is then updated via projected gradient ascent:
\begin{equation}
    \lambda \leftarrow \max\left(0, \lambda + \eta_\lambda \left( \text{CVaR}^{\text{empirical}}_\epsilon[\mathcal{W}] - \beta \right) \right),
    \label{eq:lambda_update}
\end{equation}

We update $\lambda$ using empirical CVaR rather than the critic's estimate to ensure the penalty reflects realized safety performance, independent of value estimation errors.

\begin{table*}[t]
    \centering
    \caption{\textbf{Performance comparison on Safety-Gymnasium tasks.} Episode return ($r$, higher $\uparrow$) and episode cost ($c$, lower $\downarrow$). \textcolor{unsafegray}{Gray} indicates constraint violation ($>25$), Black indicates safety, and \best{Green} indicates the highest return satisfying safety constraints.}
    \label{tab:main_results}
    
    \resizebox{\textwidth}{!}{%
    \begin{tabular}{ll cccccc}
        \toprule
        \textbf{Task} & \textbf{Metric} & \textbf{SACLag} & \textbf{TD3Lag} & \textbf{SACPID} & \textbf{TD3PID} & \textbf{WCSAC} & \textbf{SL-SAC} \\
        \midrule
        
        % --- Environment 1: Ant ---
        % Analysis: TD3PID(40.47) is unsafe. SL-SAC(3104) is highest safe return.
        \multirow{2}{*}{SafetyAntVelocity-v1} 
        & $r \uparrow$ & $1706.76 \pm 135.86$ & $1613.25 \pm 444.47$ & $1822.92 \pm 81.58$ & \bad{$2947.76 \pm 535.29$} & $2846.36 \pm 13.01$ & \best{3104.49 $\pm$ 19.60} \\
        & $c \downarrow$ & $13.68 \pm 5.84$ & $23.86 \pm 4.92$ & $4.58 \pm 1.91$ & \bad{$40.47 \pm 9.58$} & $10.60 \pm 2.92$ & \best{3.57 $\pm$ 0.45} \\
        \midrule
        
        % --- Environment 2: HalfCheetah ---
        % Analysis: TD3PID(44.62) unsafe. SL-SAC(2828) highest safe.
        \multirow{2}{*}{SafetyHalfCheetahVelocity-v1} 
        & $r \uparrow$ & $2776.93 \pm 3.15$ & $2706.67 \pm 77.12$ & $2645.85 \pm 94.44$ & \bad{$2739.13 \pm 141.38$} & $2814.19 \pm 22.06$ & \best{2828.30 $\pm$ 13.78} \\
        & $c \downarrow$ & $07.17 \pm 9.25$ & $13.78 \pm 14.70$ & $12.41 \pm 1.59$ & \bad{$44.62 \pm 0.88$} & $0.00 \pm 0.00$ & \best{0.00 $\pm$ 0.00} \\
        \midrule
        
        % --- Environment 3: Walker2d ---
        % Analysis: SACLag(28.21), SACPID(25.50), WCSAC(29.10) are UNSAFE. 
        % Safe: TD3Lag(2873), TD3PID(2861), SL-SAC(3003). Winner: SL-SAC.
        \multirow{2}{*}{SafetyWalker2dVelocity-v1} 
        & $r \uparrow$ & \bad{$2582.17 \pm 113.49$} & $2873.17 \pm 76.04$ & \bad{$1871.99 \pm 570.99$} & $2861.36 \pm 158.88$ & \bad{$2705.12 \pm 175.62$} & \best{3003.84 $\pm$ 12.30} \\
        & $c \downarrow$ & \bad{$28.21 \pm 7.81$} & $23.56 \pm 1.64$ & \bad{$25.50 \pm 2.02$} & $13.89 \pm 2.04$ & \bad{$29.10 \pm 29.11$} & \best{5.10 $\pm$ 6.72} \\
        \midrule
        
        % --- Environment 4: Swimmer ---
        % Analysis: SACLag(26.92) and TD3PID(25.54) are UNSAFE. 
        % Safe: TD3Lag(21.24), SACPID(7.99), WCSAC(18.19), SL-SAC(13.06).
        % Winner: TD3Lag is the highest SAFE return.
        \multirow{2}{*}{SafetySwimmerVelocity-v1} 
        & $r \uparrow$ & \bad{$14.10 \pm 2.75$} & \best{21.24 $\pm$ 3.71} & $7.99 \pm 1.15$ & \bad{$43.14 \pm 3.67$} & $18.19 \pm 14.90$ & $13.06 \pm 12.37$ \\
        & $c \downarrow$ & \bad{$26.92 \pm 8.95$} & \best{15.42 $\pm$ 2.82} & $5.78 \pm 1.07$ & \bad{$25.54 \pm 0.65$} & $16.27 \pm 16.28$ & $2.40 \pm 7.00$ \\
        \midrule
        
        % --- Environment 5: Humanoid ---
        % Analysis: All safe. SACPID(5428) is highest.
        \multirow{2}{*}{SafetyHumanoidVelocity-v1} 
        & $r \uparrow$ & $5141.52 \pm 206.28$ & $5405.15 \pm 116.32$ & \best{5428.33 $\pm$ 25.92} & $5266.17 \pm 227.36$ & $2995.18 \pm 412.86$ & $5183.66 \pm 109.39$ \\
        & $c \downarrow$ & $0.25 \pm 0.16$ & $0.23 \pm 0.23$ & \best{0.16 $\pm$ 0.11} & $0.30 \pm 0.39$ & $7.20 \pm 3.34$ & $0.00 \pm 0.00$ \\
        \midrule
        
        % --- Environment 6: Hopper ---
        % Analysis: TD3Lag(36.1), SACPID(48.5), TD3PID(44.1) are UNSAFE.
        % Safe: SACLag(935), WCSAC(1040), SL-SAC(1292). Winner: SL-SAC.
        \multirow{2}{*}{SafetyHopperVelocity-v1} 
        & $r \uparrow$ & $935.42 \pm 38.40$ & \bad{$990.79 \pm 389.48$} & \bad{$997.47 \pm 25.14$} & \bad{$1120.09 \pm 42.64$} & $1040.07 \pm 5.12$ & \best{1292.61 $\pm$ 107.73} \\
        & $c \downarrow$ & $18.56 \pm 18.20$ & \bad{$36.16 \pm 20.33$} & \bad{$48.53 \pm 9.48$} & \bad{$44.17 \pm 68.23$} & $15.40 \pm 38.18$ & \best{12.53 $\pm$ 23.38} \\
        \midrule
        
        % --- CarCircle ---
        % Analysis: SACLag(29.29) UNSAFE. 
        % Safe: TD3Lag(26.69), SACPID(28.39), TD3PID(17.10), WCSAC(16.97), SL-SAC(17.67). Winner: SACPID.
        \multirow{2}{*}{SafetyCarCircle1-v0} 
        & $r \uparrow$ & \bad{$19.60 \pm 10.35$} & $26.69 \pm 0.67$ & \best{28.39 $\pm$ 10.79} & $17.10 \pm 0.65$ & $16.97 \pm 0.76$ & $17.67 \pm 0.40$ \\
        & $c \downarrow$ & \bad{$29.29 \pm 11.79$} & $15.05 \pm 22.20$ & \best{11.27 $\pm$ 3.25} & $24.64 \pm 1.20$ & $15.80 \pm 22.34$ & $8.87 \pm 12.54$ \\
        \midrule
        
        % --- CarGoal ---
        % Analysis: TD3PID(56.21) UNSAFE.
        % Safe: SACLag(11.88), TD3Lag(21.61), SACPID(23.94), WCSAC(10.34), SL-SAC(22.16). Winner: SACPID.
        \multirow{2}{*}{SafetyCarGoal1-v0} 
        & $r \uparrow$ & $11.88 \pm 1.53$ & $21.61 \pm 3.57$ & \best{23.94 $\pm$ 1.00} & \bad{$32.21 \pm 2.46$} & $10.34 \pm 2.17$ & $22.16 \pm 4.29$ \\
        & $c \downarrow$ & $17.31 \pm 7.09$ & $15.75 \pm 3.04$ & \best{21.52 $\pm$ 4.51} & \bad{$56.21 \pm 2.01$} & $6.80 \pm 4.43$ & $10.07 \pm 2.33$ \\
        \midrule
        
        % --- PointCircle ---
        % Analysis: SACLag(28.99) UNSAFE.
        % Safe: TD3Lag(69.57), SACPID(62.10), TD3PID(51.16), etc. Winner: TD3Lag.
        \multirow{2}{*}{SafetyPointCircle1-v0} 
        & $r \uparrow$ & \bad{$71.61 \pm 5.46$} & \best{69.57 $\pm$ 1.33} & $62.10 \pm 0.53$ & $51.16 \pm 1.29$ & $36.38 \pm 3.36$ & $46.74 \pm 3.64$ \\
        & $c \downarrow$ & \bad{$28.99 \pm 1.77$} & \best{04.02 $\pm$ 7.75} & $12.79 \pm 1.92$ & $03.91 \pm 4.54$ & $10.70 \pm 15.13$ & $19.17 \pm 20.16$ \\
        \midrule
        
        % --- PointGoal ---
        % Analysis: TD3Lag(29.20), SACPID(36.42) UNSAFE.
        % Safe: SACLag(15.04), TD3PID(16.35), WCSAC(5.48), SL-SAC(11.04). Winner: TD3PID.
        \multirow{2}{*}{SafetyPointGoal1-v0} 
        & $r \uparrow$ & $15.04 \pm 2.71$ & \bad{$22.83 \pm 3.29$} & \bad{$14.90 \pm 5.21$} & \best{16.35 $\pm$ 7.23} & $5.48 \pm 2.00$ & $11.04 \pm 0.72$ \\
        & $c \downarrow$ & $19.98 \pm 7.93$ & \bad{$29.20 \pm 13.58$} & \bad{$36.42 \pm 2.53$} & \best{17.40 $\pm$ 9.62} & $3.17 \pm 1.92$ & $16.93 \pm 2.54$ \\    
        
        \bottomrule
    \end{tabular}
    }
\end{table*}

\subsection{Sensitivity Analysis of CVaR Estimates}

While Lemma \ref{lemma:contraction} ensures convergence to the true distribution $Z^\pi$, practical safety depends on the accuracy of the risk metric derived from the approximated distribution $Z_\psi$. The following theorem provides a bound on CVaR estimation error in terms of the mean squared quantile error.

\begin{theorem}[CVaR Error Bound via Quantile Estimation Error]
\label{thm:cvar_error}
Let $Z^\pi(s,a)$ be the true cost return distribution and $Z_\psi(s,a)$ be the IQN approximation with parameters $\psi$. Define the mean squared quantile error:
\begin{equation}
\delta^2 := \mathbb{E}_{\tau \sim \mathrm{Unif}[0,1]} \left[ \left| F^{-1}_{Z^\pi(s,a)}(\tau) - Z_\psi(s,a; \tau) \right|^2 \right],
\end{equation}
where $Z_\psi(s,a; \tau)$ denotes the estimated $\tau$-quantile. Then for any risk level $\epsilon \in (0,1)$:
\begin{equation}
\left| \mathrm{CVaR}_\epsilon(Z^\pi(s,a)) - \mathrm{CVaR}_\epsilon(Z_\psi(s,a)) \right| \leq \frac{\delta}{\sqrt{\epsilon}}.
\end{equation}
\end{theorem}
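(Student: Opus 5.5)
The plan is to use the quantile representation of CVaR from the background section and then apply Cauchy--Schwarz on the tail interval $[1-\epsilon,1]$. Write $g(\tau) := F^{-1}_{Z^\pi(s,a)}(\tau) - Z_\psi(s,a;\tau)$. We interpret $\mathrm{CVaR}_\epsilon(Z_\psi(s,a))$ as the CVaR of the distribution whose quantile function is $\tau\mapsto Z_\psi(s,a;\tau)$. This is the IQN model evaluated at uniformly sampled $\tau$, i.e.\ the law of $Z_\psi(s,a;U)$ with $U\sim\mathrm{Unif}[0,1]$. Using the definition $\mathrm{CVaR}_\epsilon[Z]=\frac{1}{\epsilon}\int_{1-\epsilon}^1 F_Z^{-1}(\tau)\,d\tau$ for both distributions, we get
\begin{equation*}
\mathrm{CVaR}_\epsilon(Z^\pi(s,a)) - \mathrm{CVaR}_\epsilon(Z_\psi(s,a)) = \frac{1}{\epsilon}\int_{1-\epsilon}^{1} g(\tau)\,d\tau .
\end{equation*}

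The key estimate comes next. Take absolute values and apply Cauchy--Schwarz on $[1-\epsilon,1]$ with the functions $1$ and $g$:
\begin{equation*}
\left|\frac{1}{\epsilon}\int_{1-\epsilon}^{1} g(\tau)\,d\tau\right| \le \frac{1}{\epsilon}\left(\int_{1-\epsilon}^{1}1\,d\tau\right)^{1/2}\left(\int_{1-\epsilon}^{1} g(\tau)^2\,d\tau\right)^{1/2} = \frac{1}{\sqrt{\epsilon}}\left(\int_{1-\epsilon}^{1} g(\tau)^2\,d\tau\right)^{1/2}.
\end{equation*}
Since $g^2\ge 0$, the tail integral is at most $\int_0^1 g(\tau)^2\,d\tau = \delta^2$. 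This yields the claimed bound $\delta/\sqrt{\epsilon}$. Jensen's inequality applied to $\frac{1}{\epsilon}\int_{1-\epsilon}^1 |g|$ gives the same result.

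The only step needing care is justifying the quantile representation for $Z_\psi$. The IQN output $\tau\mapsto Z_\psi(s,a;\tau)$ need not be monotone, so it may not literally be a quantile function. I would handle this by defining $\mathrm{CVaR}_\epsilon(Z_\psi)$ as the tail average $\frac{1}{\epsilon}\int_{1-\epsilon}^1 Z_\psi(s,a;\tau)\,d\tau$, consistent with the estimator in Eq.~\ref{eq:cvar_estimate}. Alternatively, one can note that for a monotone output it coincides with the standard CVaR. We also need integrability: finiteness of $\delta$ together with square-integrability of $F^{-1}_{Z^\pi}$ (bounded costs give bounded returns) makes every integral above finite. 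Beyond this bookkeeping, the argument is a one-line Cauchy--Schwarz bound.
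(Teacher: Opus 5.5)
Your proposal is correct and matches the paper's proof essentially line for line. You write the CVaR gap as $\frac{1}{\epsilon}\int_{1-\epsilon}^1 g(\tau)\,d\tau$, apply Cauchy--Schwarz on $[1-\epsilon,1]$ to get $\frac{1}{\sqrt{\epsilon}}\bigl(\int_{1-\epsilon}^1 g^2\bigr)^{1/2}$, and enlarge the domain to $[0,1]$ to obtain $\delta/\sqrt{\epsilon}$, with your remark on defining $\mathrm{CVaR}_\epsilon(Z_\psi)$ as the tail average of the possibly non-monotone IQN output making explicit what the paper assumes silently.
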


Theorem \ref{thm:cvar_error} bounds the CVaR estimation error by the root mean squared quantile error scaled by $1/\sqrt{\epsilon}$. The proof is provided in Appendix~\ref{thm:cvar_error_appendix}.

\begin{corollary}[Tail Risk Guarantees: CVaR vs. Expected Cost Constraints]
\label{cor:cvar_vs_expected}
Consider a policy $\pi$ with non-negative cost distribution $Z^\pi$ and IQN approximation $Z_\psi$ achieving mean squared quantile error $\delta^2$ as in Theorem \ref{thm:cvar_error}. For a given cost limit $\beta$ and risk level $\epsilon \in (0,1)$, the following guarantees hold:

\begin{itemize}
    \item \textbf{CVaR-based constraint:} If $\widehat{\mathrm{CVaR}}_\epsilon(Z_\psi) \leq \beta$, then
    \begin{equation}
        \mathrm{CVaR}_\epsilon(Z^\pi) \leq \beta + \frac{\delta}{\sqrt{\epsilon}}.
        \label{eq:cvar_constraint_bound}
    \end{equation}
    
    \item \textbf{Expected cost constraint:} If $\mathbb{E}[Z_\psi] \leq \beta$, then by the worst‑case tail distribution:
    \begin{equation}
        \mathrm{CVaR}_\epsilon(Z^\pi) \leq \frac{\beta + \delta}{\epsilon}.
        \label{eq:expected_constraint_bound}
    \end{equation}
\end{itemize}

The CVaR‑based constraint yields a strictly tighter bound on true tail risk whenever
\begin{equation}
    \beta + \frac{\delta}{\sqrt{\epsilon}} < \frac{\beta + \delta}{\epsilon},
\end{equation}
which holds for all $\beta > 0$, $\delta > 0$, and $\epsilon \in (0,1)$.
\end{corollary}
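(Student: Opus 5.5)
The plan is to derive both bullets from Theorem~\ref{thm:cvar_error} together with two elementary facts about quantile representations. Throughout, I identify $\widehat{\mathrm{CVaR}}_\epsilon(Z_\psi)$ with the exact tail average $\frac{1}{\epsilon}\int_{1-\epsilon}^1 Z_\psi(s,a;\tau)\,d\tau$. That is, I ignore the Monte Carlo error from sampling $N$ quantiles in Eq.~\ref{eq:cvar_estimate}, and I state this as a standing assumption. Likewise, I take $\mathbb{E}[Z_\psi]=\int_0^1 Z_\psi(s,a;\tau)\,d\tau$, which is the mean implied by the IQN quantile function.

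\textbf{CVaR-based constraint.} This is a direct triangle inequality. Theorem~\ref{thm:cvar_error} gives $\mathrm{CVaR}_\epsilon(Z^\pi)\le \mathrm{CVaR}_\epsilon(Z_\psi)+\delta/\sqrt{\epsilon}$. Combining this with the hypothesis $\widehat{\mathrm{CVaR}}_\epsilon(Z_\psi)\le\beta$ yields Eq.~\ref{eq:cvar_constraint_bound} immediately.

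\textbf{Expected-cost constraint.} This needs two steps.
\begin{enumerate}
\item Bound the mean error by $\delta$. By Jensen (or Cauchy--Schwarz) on $[0,1]$,
\[
\bigl|\mathbb{E}[Z^\pi]-\mathbb{E}[Z_\psi]\bigr| \le \int_0^1 \bigl|F^{-1}_{Z^\pi}(\tau)-Z_\psi(s,a;\tau)\bigr|\,d\tau \le \delta .
\]
Hence $\mathbb{E}[Z^\pi]\le\beta+\delta$.
\item Use non-negativity of $Z^\pi$. Every quantile $F^{-1}_{Z^\pi}(\tau)$ is $\ge 0$, so
\[
\mathbb{E}[Z^\pi]=\int_0^1 F^{-1}_{Z^\pi}(\tau)\,d\tau \ge \int_{1-\epsilon}^1 F^{-1}_{Z^\pi}(\tau)\,d\tau=\epsilon\,\mathrm{CVaR}_\epsilon(Z^\pi).
\]
\end{enumerate}
Dividing by $\epsilon$ gives Eq.~\ref{eq:expected_constraint_bound}. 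This is the ``worst-case tail'' inequality: it is attained by a two-point distribution that places mass $\epsilon$ at $(\beta+\delta)/\epsilon$ and mass $1-\epsilon$ at $0$. I would mention this to justify the phrase in the statement.

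\textbf{Comparison.} For $\epsilon\in(0,1)$ we have $1<1/\epsilon$ and $1/\sqrt{\epsilon}<1/\epsilon$. Therefore
\[
\beta+\frac{\delta}{\sqrt{\epsilon}} < \frac{\beta}{\epsilon}+\frac{\delta}{\epsilon}
\]
whenever $\beta,\delta\ge 0$ and at least one of them is positive. This certainly covers $\beta>0,\ \delta>0$.

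\textbf{Main obstacle.} The only real subtlety is the identification of the sampled estimator $\widehat{\mathrm{CVaR}}_\epsilon$, and of $\mathbb{E}[Z_\psi]$, with integrals of the IQN quantile output. If $Z_\psi(s,a;\cdot)$ is not monotone, it need not be a genuine quantile function. In that case I would define $\mathrm{CVaR}_\epsilon(Z_\psi)$ and $\mathbb{E}[Z_\psi]$ directly as these integrals, which is also how they enter Theorem~\ref{thm:cvar_error}. Beyond that, the argument consists of routine inequalities.
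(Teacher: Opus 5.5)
Your proof is correct and follows the paper's argument step for step. Part 1 comes from Theorem~\ref{thm:cvar_error} with the estimator's finite-sample error ignored, which the paper handles through Corollary~\ref{cor:safety_bound}; Part 2 uses the Cauchy--Schwarz bound $|\mathbb{E}[Z^\pi]-\mathbb{E}[Z_\psi]|\le\delta$ followed by $\mathrm{CVaR}_\epsilon(Z^\pi)\le\mathbb{E}[Z^\pi]/\epsilon$, and the closing comparison is the same termwise one. The only difference is that you actually prove $\mathrm{CVaR}_\epsilon(Z)\le\mathbb{E}[Z]/\epsilon$ from the nonnegativity of the quantile function, whereas the paper only asserts it by pointing to the extremal two-point distribution, so your version is slightly more complete.
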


Corollary~\ref{cor:cvar_vs_expected} demonstrates that direct CVaR estimation, as employed in SL‑SAC, provides stronger theoretical guarantees on tail risk compared to constraints based on expected cost. The proof is provided in Appendix~\ref{cor:cvar_vs_expected_appendix}.

\begin{theorem}[Constraint Violation Probability under CVaR Constraint]
\label{thm:cvar_gpd_violation}
Assuming the tail of the cost distribution follows a Generalized Pareto Distribution (GPD) with shape parameter $\nu < 1$, satisfying the CVaR constraint $\mathrm{CVaR}_\epsilon(Z^\pi) \leq \beta$ provides an upper bound on the violation probability:
\[
\Pr(Z^\pi > \beta) \leq (1-\epsilon) \cdot \gamma(\nu),
\]
where $\gamma(\nu) = e^{-1}$ for exponential tails ($\nu=0$) and $\gamma(\nu) = (1-\nu)^{1/\nu}$ for heavy tails ($0<\nu<1$). The proof is provided in Appendix \ref{app:cvar_gpd_violation_appendix}.

\end{theorem}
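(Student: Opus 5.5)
The plan is a peaks-over-threshold argument. We split the violation event at the Value-at-Risk threshold. The GPD assumption then gives closed forms for both the CVaR and the conditional exceedance probability, and the CVaR constraint pins down how far $\beta$ sits beyond the threshold in units of the GPD scale.

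\textbf{Step 1 (threshold and GPD model).} Let $u := F^{-1}_{Z^\pi}(1-\epsilon)$ be the VaR threshold appearing in the CVaR definition. Write $p_u := \Pr(Z^\pi > u)$ for the tail mass beyond it. The hypothesis is interpreted as follows: the excess $Y := Z^\pi - u$, conditional on $Z^\pi > u$, is $\mathrm{GPD}(\nu,\sigma)$ with $\sigma>0$ and $\nu<1$. Its survival function is therefore $\Pr(Y>y) = (1+\nu y/\sigma)^{-1/\nu}$, with the limit $e^{-y/\sigma}$ at $\nu=0$. Its mean is $\sigma/(1-\nu)$, which is finite precisely because $\nu<1$.

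\textbf{Step 2 (CVaR in closed form and use of the constraint).} From the conditional-expectation form of the CVaR definition,
\[
\mathrm{CVaR}_\epsilon(Z^\pi) = u + \mathbb{E}[Y] = u + \frac{\sigma}{1-\nu}.
\]
The constraint $\mathrm{CVaR}_\epsilon(Z^\pi)\le\beta$ then gives
\[
\beta - u \;\ge\; \frac{\sigma}{1-\nu} \;>\; 0.
\]
In particular $\beta > u$, so the violation event lies inside the tail.

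\textbf{Step 3 (decompose and bound).} Since $\beta>u$, we can factor the violation probability as
\[
\Pr(Z^\pi>\beta) = p_u \,\Pr(Y > \beta-u).
\]
The GPD survival function is decreasing in its argument, so by Step 2
\[
\Pr(Y>\beta-u) \le \Pr\!\left(Y > \tfrac{\sigma}{1-\nu}\right).
\]
Evaluating at this point is routine.
\begin{itemize}
\item For $0<\nu<1$: $\bigl(1+\tfrac{\nu}{1-\nu}\bigr)^{-1/\nu} = (1-\nu)^{1/\nu}$.
\item For $\nu=0$: $e^{-1}$, which is also the limit of $(1-\nu)^{1/\nu}$ as $\nu\to0$, confirming the two cases of $\gamma(\nu)$ agree.
\end{itemize}
If one wants $\nu<0$ covered as well, the same formula holds on the bounded support. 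Hence
\[
\Pr(Z^\pi>\beta)\le p_u\,\gamma(\nu).
\]

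\textbf{Step 4 (identify the tail mass; main obstacle).} The only delicate point is the prefactor. With a continuous distribution and the upper-tail convention of the CVaR definition, $p_u = \epsilon$. The statement's factor $(1-\epsilon)$ corresponds to the convention in which the confidence level, rather than the tail fraction, is the parameter; under that convention the tail mass beyond the VaR is $1-\epsilon$. I will state the convention explicitly, noting that the factor is exactly the tail mass $\Pr(Z^\pi>u)$, and record that the argument is identical under either labeling. If $F$ has an atom at $u$, I would use the standard fact $p_u\le$ the tail fraction, which only strengthens the bound.

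The other things to verify are the monotonicity of the GPD survival function and the finiteness of the mean; both are immediate for $\nu<1$.
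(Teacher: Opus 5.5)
Your proof is correct and follows the paper's argument essentially step for step: split at the VaR threshold $u$, use $\mathrm{CVaR}_\epsilon(Z^\pi)=u+\sigma/(1-\nu)$ to get $\beta-u\ge\sigma/(1-\nu)$, and evaluate the decreasing GPD survival function at that gap. The paper also fixes the prefactor as you propose, by taking $u$ to be the $\epsilon$-quantile so that $\Pr(Z^\pi>u)=1-\epsilon$.

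One caveat applies only to your atom aside. If $F$ has an atom at $u$, the identity in Step 2 becomes $\mathrm{CVaR}_\epsilon(Z^\pi)=u+r\,\sigma/(1-\nu)$, where $r\le 1$ is the ratio of $p_u$ to the tail fraction, so the gap bound weakens as well. The conclusion then does not follow from $p_u\le$ tail fraction alone. It follows from the fact that $r\mapsto r\,\Pr\bigl(Y>r\sigma/(1-\nu)\bigr)$ is nondecreasing on $(0,1]$.
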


%% file: experiments.tex
\section{Experiments}
\label{sec:experiments}

\begin{figure*}[ht!]
    \centering
    \includegraphics[width=\linewidth]{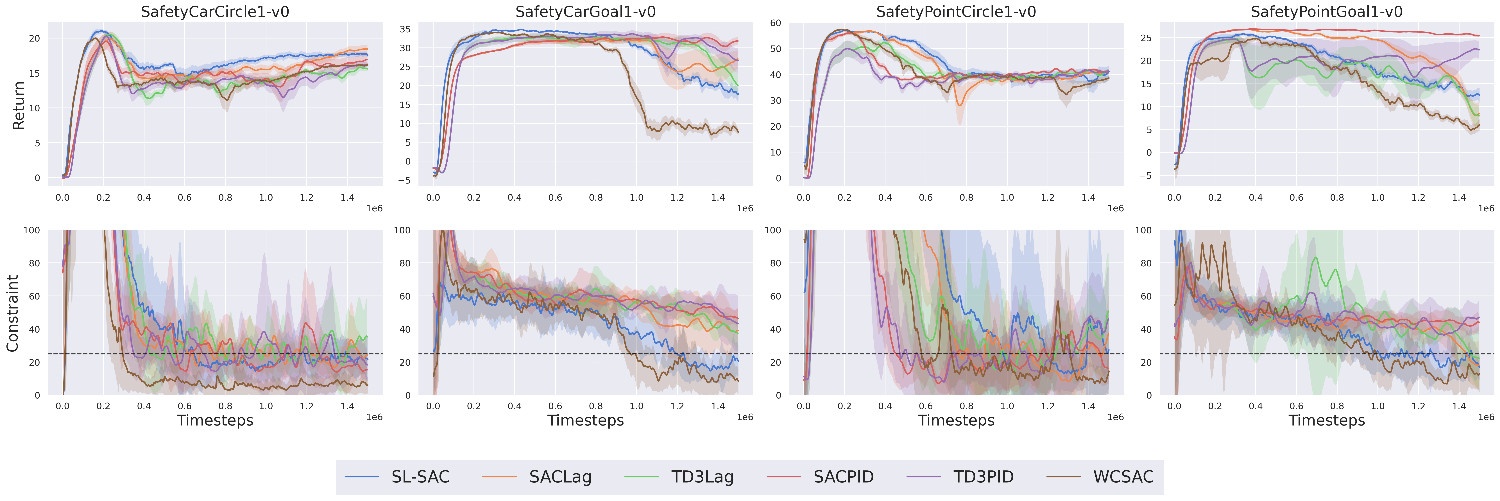}
    \caption{Training curves for safety-navigation tasks. Episode return (left) and cost (right), averaged over 5 seeds (solid line: mean, shaded area: standard deviation). The dashed line indicates the constraint threshold $\beta=25$.}
    \label{fig:main_navigation_results}
\end{figure*}

We evaluate SL-SAC on the Safety-Gymnasium benchmark~\citep{ji2023safety}. Each environment provides a reward signal $r(s,a)$ that encourages task completion and a cost signal $c(s, a)$ that penalizes constraint violations. Costs are typically incurred when the agent enters unsafe regions, collides with obstacles, or exceeds velocity limits. The safety threshold is set to $\beta=25$ for all tasks. Implementation details and environment descriptions are provided in Appendix~\ref{sec:experimental_setup}. For each algorithm, we perform training over 5 random seeds and evaluate performance using 30 deterministic evaluation rollouts. We report the mean episodic return and cost with standard errors across seeds.

We compare SL-SAC with state-of-the-art off-policy SafeRL algorithms, including Lagrangian variants of SAC (SAC-Lag) and TD3 (TD3-Lag)~\citep{ray2019benchmarking,fujimoto2018addressing}, their PID-controlled counterparts (SAC-PID, TD3-PID)~\citep{stooke2020responsive}, and WCSAC with non-parametric cost estimation~\citep{yang2023safety}. For completeness, we also include comparisons with on-policy methods, CPO~\citep{achiam2017constrained} and PPOLag~\citep{ray2019benchmarking} in Appendix~\ref{subsec:on_policy_comparison}, though note that their sample efficiency and performance characteristics differ due to the distinct on-policy training paradigm.

Table~\ref{tab:main_results} summarizes the evaluation results, while Figures~\ref{fig:main_velocity_results} and~\ref{fig:main_navigation_results} depict the training dynamics. Across the diverse set of Velocity and Navigation tasks, SL-SAC demonstrates consistent constraint satisfaction, maintaining average episodic costs below the safety threshold of 25 in nearly all environments. In terms of task performance, the algorithm achieves cumulative returns that are competitive with, and in some cases superior to, state-of-the-art baselines. To control for model capacity, we provide an ablation in Appendix~\ref{subsec:m1_capacity_ablation} using a single critic pair $M = 1$ for SL-SAC. The results indicate that SL-SAC remains effective even with same parameters count, suggesting that our improvements are not driven solely by the larger ensemble size.

\paragraph{Evaluation on Autonomous Driving.}
To validate performance in a domain with dynamic constraints and high stochasticity, we evaluate SL-SAC on the MetaDrive simulator~\citep{li2022metadrive}. We adopt the specific environment configuration and task definition from \citet{zhang2024safe}, which serves as a robust testbed for generalizing safety constraints under uncertainty. This setting introduces significant epistemic uncertainty due to varying road topologies and aleatoric uncertainty from stochastic traffic behavior.

The environment consists of three-block road sequences stochastically generated from a set of primitives (Straight, Curve, Intersection, Roundabout) with sampling probabilities $(0.2, 0.3, 0.3, 0.2)$. The agent controls a vehicle using Lidar-based observations and ego-state information, navigating through moderate traffic density $0.12$. A unit cost is incurred for collisions with vehicles, obstacles, or driving off-road. The constraint threshold is set to $\beta = 0.1$, a strict safety requirement aiming for near-zero collisions.

Figure~\ref{fig:main_metadrive_results} depicts the training dynamics. SL-SAC and WCSAC emerge as the top-performing algorithms, significantly outperforming PID-based and standard Lagrangian baselines. We report the cumulative episodic cost, as shown in the results, strictly satisfying the tight threshold ($\beta=0.1$) is challenging for all methods due to the unavoidable aleatoric uncertainty in traffic generation, resulting in steady-state costs above the threshold. However, SL-SAC stabilizes at a lower average cost than baselines while maintaining high returns.

\begin{figure}[hb!]
    \centering
    \includegraphics[width=\linewidth]{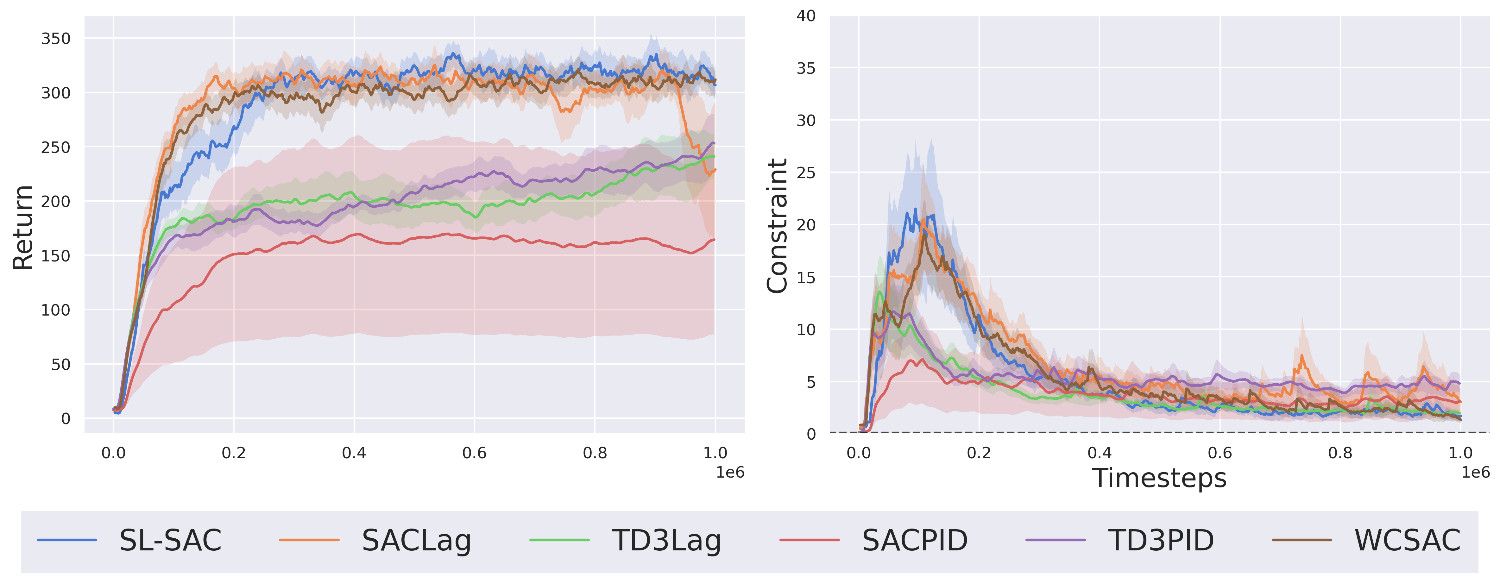}
    \caption{Training curves for MetaDrive. Episode return (left) and cost (right), averaged over 5 seeds (solid line: mean, shaded area: standard deviation). The dashed line indicates the constraint threshold $\beta=0.1$.}
    \label{fig:main_metadrive_results}
\end{figure}
\paragraph{Impact of optimizer.}
We compare aSGLD optimizer against AdamW for training the ensemble of reward critics, while keeping all other components fixed. As shown in Figure~\ref{fig:ant_adam_asgld}, aSGLD consistently yields higher asymptotic returns and lower cost variance. When AdamW is used, the ensemble reduces to a standard deep ensemble trained via independent initialization. In contrast, the aSGLD variant, which combines adaptive drift with isotropic noise injection, actively promotes parameter diversity and mitigates ensemble collapse. This leads to more robust value estimation and improved exploration, which translates into the performance gains observed in our experiments.

% \paragraph{CVaR vs. Expected Value for Constraint Enforcement}

% We compare two approaches for updating the Lagrange multiplier $\lambda$: using CVaR$_\epsilon$ of recent episodic costs versus using their expected value. As shown in Figure~\ref{fig:adam_vs_asgld_expected_cvar}, CVaR-based updates lead to more stable learning compared to expected value. When using $\mathbb{E}[\mathcal{W}]$, particularly in Walker, the agent exhibits increased variance in both reward and cost. In contrast, CVaR maintains consistent performance with costs remaining below the safety threshold. This difference can be attributed to CVaR's focus on tail outcomes, the expected value aggregates over all episodes equally, while CVaR emphasizes worst-case scenarios, leading to more conservative multiplier adjustments that improve training stability.

\begin{figure*}[ht!] % The * spans both columns. [t] tries to place it at top of page.
    \centering
    
    % --- Row 1 ---
    \begin{subfigure}[b]{0.48\textwidth}
        \centering
        % Replace with your first environment image
        \includegraphics[width=\linewidth]{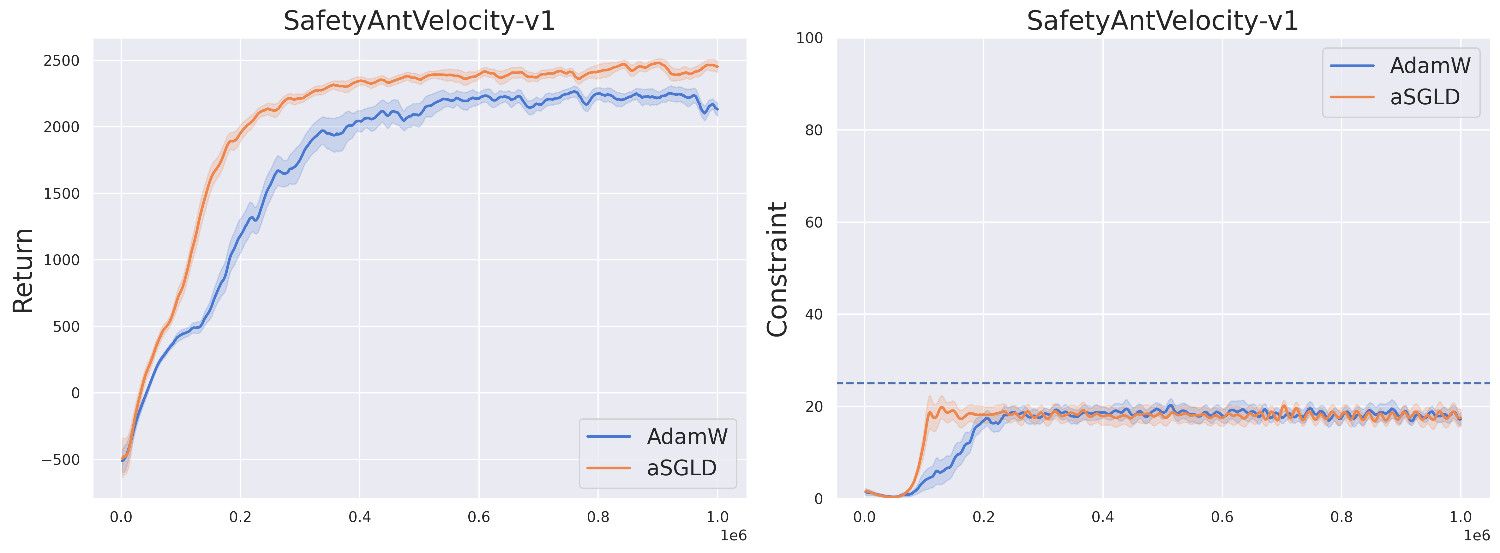} 
        \caption{aSGLD vs AdamW}
        \label{fig:ant_adam_asgld}
    \end{subfigure}
    \hfill % Adds flexible space between the two images
    \begin{subfigure}[b]{0.48\textwidth}
        \centering
        % Replace with your second environment image
        \includegraphics[width=\linewidth]{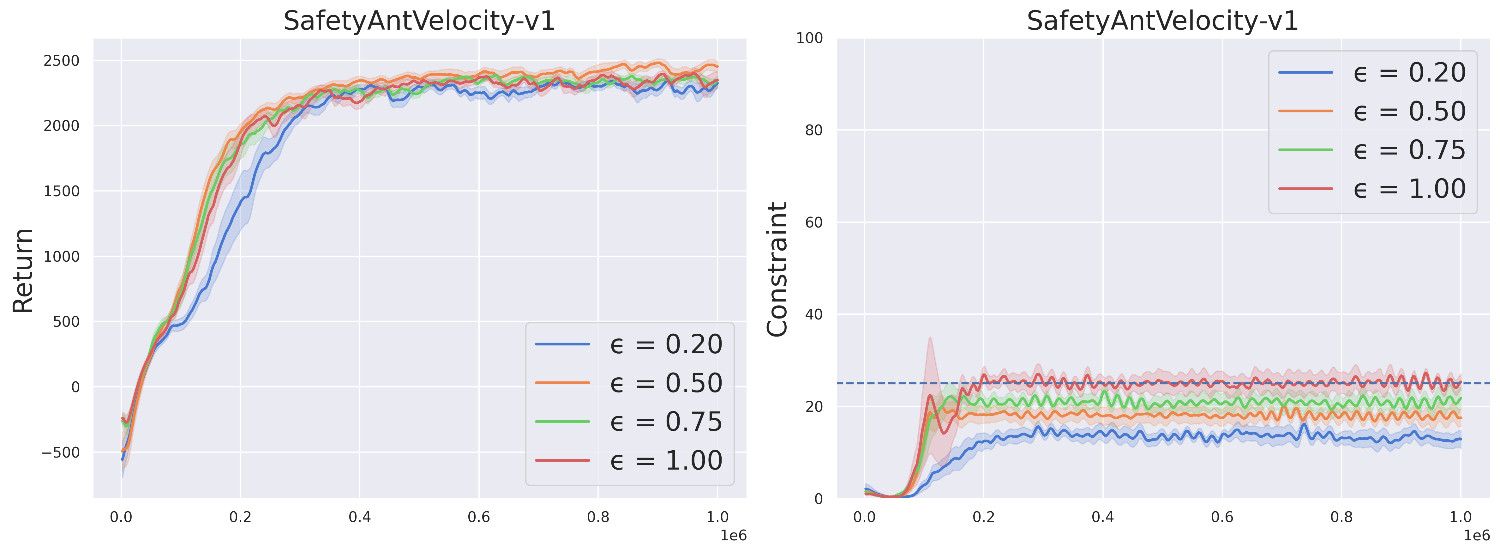}
        \caption{CVaR ablation ($\epsilon$)}
        \label{fig:ant_cvar_ablation}
    \end{subfigure}
    
    \caption{\textbf{Ablation studies: optimizer and CVaR level.} Left: Reward critic optimizer (aSGLD vs AdamW). Right: CVaR confidence level $\epsilon \in \{0.2, 0.5, 0.75, 1.0\}$ ($\epsilon=1.0$ is expected cost).}
    \label{fig:ablations}
\end{figure*}

\paragraph{Sensitivity to CVaR Confidence Level ($\epsilon$).}
We investigate the sensitivity of our proposed method to the CVaR confidence level, $\epsilon$. Specifically, setting $\epsilon = 1.0$ is equivalent to optimizing the expected value of the cost distribution, while lower values of $\epsilon$ focus on the upper tail to mitigate worst-case outcomes. 

Figure~\ref{fig:ant_cvar_ablation} demonstrate a distinct relationship between the confidence level and the agent's safety profile. As observed in the cost plot, decreasing $\epsilon$ reduces the average cost incurred during training. Lower $\epsilon$ values induce safer behaviors, while the risk-neutral setting ($\epsilon=1.0$) frequently oscillate around or exceed the safety threshold.

Crucially, this improvement in safety does not come at the expense of task performance. The reward curves show that the agent achieves competitive returns across the spectrum of $\epsilon$ values, suggesting that the method effectively navigates the constrained optimization landscape without becoming overly conservative. 

%% file: related_work.tex
\section{Related Work}

SafeRL is predominantly formulated as Constrained Markov Decision Processes (CMDPs), typically solved via primal or primal-dual methods. Primal approaches, such as Constrained Policy Optimization (CPO)~\citep{achiam2017constrained} and PCPO~\citep{Yang2020ProjectionBased}, enforce constraints by projecting policy updates onto the feasible region. Conversely, primal-dual methods employ Lagrangian relaxation, converting the constrained problem into an unconstrained dual formulation~\citep{stooke2020responsive}. While theoretically grounded, these methods optimize for expected returns subject to expected costs, a risk-neutral formulation that can be inadequate in stochastic environments where heavy-tailed cost distributions may lead to catastrophic violations.

To address tail risks, recent works have incorporated risk measures like CVaR into SafeRL. WCSAC~\citep{yang2023safety} employs IQN to model the distributional safety critic, allowing for risk-sensitive policy optimization. However, WCSAC updates its Lagrange multiplier based on the critic's estimated risk. This creates a  dependency on approximation accuracy, if the distributional critic underestimates the tail risk early in training, the safety penalty remains insufficient, leading to constraint violations. In contrast, SL-SAC decouples the constraint enforcement by updating the multiplier based on the empirical CVaR of realized episodic costs. This ensures that safety penalties are grounded in actual agent performance rather than potentially erroneous value estimates. Other approaches incorporate CVaR constraints via policy gradient methods~\citep{chow2018risk} or optimize for varying risk levels through distributional modeling~\citep{tang2019worst}. However, these methods typically suffer from high gradient variance due to trajectory sampling or rely on restrictive parametric assumptions that fail to capture heavy-tailed cost distributions efficiently.

SGLD and its adaptive variants (aSGLD)~\citep{li2016preconditioned} inject controlled noise into gradient updates to promote parameter-space exploration. In RL, this has been leveraged to improve exploration and prevent mode collapse, as demonstrated in LSAC~\citep{ishfaq2025langevin}, which applies aSGLD to ensembles of reward critics. Regarding safety, \citet{lei2024langevin} recently proposed using Langevin dynamics to model the policy distribution for constraint satisfaction. While LSAC utilizes aSGLD for unconstrained reward maximization and \citet{lei2024langevin} focus on policy-based safety, SL-SAC extends the application of aSGLD-optimized critic ensembles to the Constrained MDP setting. By integrating robust reward estimation with distributional cost control, SL-SAC aims to improve value function robustness in safety-critical environments, where standard optimizers may converge to suboptimal safe policies.

%% file: conclusion.tex
\section{Conclusion}

This paper introduced Safe Langevin Soft Actor-Critic (SL-SAC), a risk-sensitive SafeRL algorithm that addresses poor generalization and tail-risk underestimation in constrained continuous control. SL-SAC combines three key mechanisms: aSGLD-optimized reward critics for improved value estimation, IQN-based distributional cost modeling for CVaR constraint enforcement, and CVaR-driven Lagrange updates for responsive safety adaptation. Theoretically, we provide guarantees on CVaR estimation error and prove that CVaR-based Lagrange updates yield stronger safety signals than expected-cost approaches. Empirically, SL-SAC achieves state-of-the-art safety-performance trade-offs, obtaining the lowest cost in 7 of 10 Safety-Gymnasium tasks while maintaining competitive returns. The improved safety and performance come with a modest computational overhead, as analyzed in Section~\ref{subsec:computational_efficiency}, due to the ensemble of reward critics and aSGLD optimizer. Future work focuses on mitigating this overhead by exploring more efficient Langevin optimizers and extending the algorithm to other risk measures.

%% file: appendix.tex
\section{Probabilistic Interpretation of Critic Updates}
\label{sec:probabilistic_critic}

To motivate the use of Adaptive SGLD for the ensemble of reward critics, we provide a probabilistic interpretation of the objective function. Let $\mathcal{D}_{\text{batch}} = \{(s_i, a_i, r_i, s'_i, d_i)\}_{i=1}^B$ be a minibatch of transitions sampled from the replay buffer, where $B$ is the batch size and $d_i$ indicates episode termination. The standard objective for training a critic $Q_\phi$ parameterized by $\phi$ is to minimize the Mean Squared Error (MSE) against the Bellman target $y_i$:

\begin{equation}
    \mathcal{L}_{\text{MSE}}(\phi) = \frac{1}{B} \sum_{i=1}^B \left( y_i - Q_\phi(s_i, a_i) \right)^2,
    \label{eq:mse_loss_appendix}
\end{equation}

where $y_i = r_i + \gamma (1-d_i) [Q_{\text{target}}(s'_i, a'_i) - \alpha \log \pi(a'_i|s'_i)]$. We model the Bellman targets as noisy observations of the critic's underlying predictions, assuming a Gaussian likelihood $y_i \sim \mathcal{N}(Q_\phi(s_i, a_i), \sigma^2)$. Here, $\sigma^2$ represents the assumed variance of the Bellman error, which captures aleatoric uncertainty in the target values. This yields the likelihood function for the parameters $\phi$:

\begin{equation}
    p(\mathcal{D}_{\text{batch}} | \phi) \propto \exp \left( - \frac{1}{2\sigma^2} \sum_{i=1}^B (y_i - Q_\phi(s_i, a_i))^2 \right).
\end{equation}

Furthermore, we impose a Gaussian prior on the network parameters $\phi \sim \mathcal{N}(0, \nu^{-1} I)$, where $\nu$ represents the precision of the prior distribution. By applying Bayes' rule, the posterior distribution over the critic parameters is:

\begin{equation}
    p(\phi | \mathcal{D}_{\text{batch}}) \propto p(\mathcal{D}_{\text{batch}} | \phi) p(\phi) \propto \exp \left( - \frac{1}{2\sigma^2} \sum_{i=1}^B (y_i - Q_\phi(s_i, a_i))^2 - \frac{\nu}{2} \|\phi\|^2 \right).
    \label{eq:posterior_appendix}
\end{equation}

The negative log-posterior, often referred to as the potential energy function $U(\phi) = -\log p(\phi | \mathcal{D}_{\text{batch}})$, can be written as:
\begin{equation}
    U(\phi) = \frac{B}{2\sigma^2} \mathcal{L}_{\text{MSE}}(\phi) + \frac{\nu}{2}\|\phi\|^2 + \text{const}.
    \label{eq:potential_energy}
\end{equation}

Equation \ref{eq:potential_energy} demonstrates that minimizing the MSE loss (with L2 regularization) is equivalent to finding the Maximum A Posteriori (MAP) estimate of $\phi$. However, a single MAP point estimate fails to capture epistemic uncertainty, the model's lack of knowledge in unexplored regions. To robustly estimate this uncertainty for safe decision-making, we must capture the diversity of plausible Q-functions by sampling from the full posterior distribution $p(\phi | \mathcal{D}_{\text{batch}})$.

Since exact sampling is analytically intractable for deep networks, we employ Stochastic Gradient Langevin Dynamics (SGLD). By injecting Gaussian noise into the gradient updates, SGLD approximates the Langevin diffusion process which has the posterior as its stationary distribution:
\begin{equation}
    \phi_{t+1} = \phi_t - \eta \nabla_\phi U(\phi_t) + \sqrt{2\eta T^{-1}} \, \xi_t, \quad \xi_t \sim \mathcal{N}(0, I).
\end{equation}

In SL-SAC, we use the adaptive variant (aSGLD) described in Eq.~\ref{eq:asgld_update}, which preconditions the gradient to handle the pathological curvature of deep networks while maintaining the Bayesian interpretation~\citep{dauphin2014identifying}. This allows each member of the ensemble to approximate a sample from the posterior $p(\phi | \mathcal{D})$, providing diverse Q-function estimates that reflect the underlying epistemic uncertainty, thereby yielding a more robust aggregated value estimate using Bayesian Model Averaging.

The standard derivation of adaptive SGLD (Eq.~\ref{eq:original_asgld}) requires scaling the noise term by the preconditioner matrix to strictly satisfy the detailed balance condition and sample from the exact posterior $p(\phi|\mathcal{D}_{\text{batch}})$. However, as empirically demonstrated in Appendix~\ref{subsec:noise_types_ablation}, adhering strictly to this Bayesian formulation results in suboptimal asymptotic returns and transient safety violations during the early phases of training. By utilizing adaptive drift with isotropic noise, SL-SAC achieves a superior balance between convergence speed and robust exploration. We therefore interpret the resulting stationary distribution as an approximation to the true posterior, prioritizing practical safety and reward maximization over exact MCMC sampling.

\section{Theoretical Proof}

\subsection{Contraction of the Distributional Bellman Operator}

Let $Z^\pi(s,a)$ be the random variable representing the return of policy $\pi$ starting at $(s,a)$. The distributional Bellman operator $\mathcal{T}^\pi$ is defined as:
\begin{equation}
    \mathcal{T}^\pi Z(s,a) \overset{D}{=} c(s,a) + \gamma Z(s', a'), \quad s' \sim P(\cdot|s,a), a' \sim \pi(\cdot|s').
\end{equation}
We measure the distance between distributions using the $p$-Wasserstein metric $W_p$.

\begin{lemma}[Contraction in Wasserstein Metric~\citep{bellemare2017distributional}]
\label{lemma:contraction}
For a fixed policy $\pi$, the distributional Bellman operator $\mathcal{T}^\pi$ is a $\gamma$-contraction in the maximal $p$-Wasserstein metric $\bar{d}_p$:
\begin{equation}
    \bar{d}_p(\mathcal{T}^\pi Z_1, \mathcal{T}^\pi Z_2) \leq \gamma \bar{d}_p(Z_1, Z_2),
\end{equation}
where $\bar{d}_p(Z_1, Z_2) = \sup_{s,a} W_p(Z_1(s,a), Z_2(s,a))$. Consequently, applying $\mathcal{T}^\pi$ iteratively converges to the unique fixed-point distribution $Z^\pi$.
\end{lemma}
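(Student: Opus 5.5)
The plan is a coupling argument carried out pointwise in $(s,a)$, followed by a supremum and an application of Banach's fixed point theorem. Throughout I take $p \ge 1$, so that $W_p$ is a metric on distributions over $\mathbb{R}$ with finite $p$-th moment. I restrict to return distribution functions $Z$ with $\sup_{s,a}\mathbb{E}|Z(s,a)|^p<\infty$, so that $\bar d_p$ is finite. The key tool is the representation
\[
W_p(\mu,\nu)=\inf_{(X,Y):\,X\sim\mu,\,Y\sim\nu}\bigl(\mathbb{E}|X-Y|^p\bigr)^{1/p}.
\]
Hence any particular coupling gives an upper bound on $W_p$.

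\textbf{Step 1 (construct a coupling).} Fix $(s,a)$ and draw $s'\sim P(\cdot|s,a)$ and $a'\sim\pi(\cdot|s')$. Independently draw $U\sim\mathrm{Unif}[0,1]$, and set
\[
X_i = F^{-1}_{Z_i(s',a')}(U), \qquad i=1,2.
\]
This quantile (comonotone) coupling is optimal for $W_p$ on $\mathbb{R}$ for every $p\ge1$. Because it is an explicit formula in $(s',a',U)$, it is automatically measurable in $(s',a')$. That sidesteps the measurable-selection issue one would face by picking an abstract optimal coupling separately for each $(s',a')$; I expect this to be the only delicate point. Conditionally on $(s',a')$, the variable $c(s,a)+\gamma X_i$ has the law of the $i$-th Bellman target, so marginally $c(s,a)+\gamma X_i \overset{D}{=} \mathcal{T}^\pi Z_i(s,a)$. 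If the cost is random, the same realization of $c$ is shared by both copies.

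\textbf{Step 2 (bound the coupled distance).} Since $c(s,a)$ cancels in the difference,
\[
W_p^p\bigl(\mathcal{T}^\pi Z_1(s,a),\mathcal{T}^\pi Z_2(s,a)\bigr)\le \mathbb{E}\bigl|\gamma(X_1-X_2)\bigr|^p=\gamma^p\,\mathbb{E}_{s',a'}\bigl[W_p^p(Z_1(s',a'),Z_2(s',a'))\bigr].
\]
Bounding the inner term by its supremum gives at most $\gamma^p\bar d_p(Z_1,Z_2)^p$. Taking $p$-th roots and then the supremum over $(s,a)$ yields $\bar d_p(\mathcal{T}^\pi Z_1,\mathcal{T}^\pi Z_2)\le\gamma\,\bar d_p(Z_1,Z_2)$.

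\textbf{Step 3 (fixed point).} The space of such $Z$ with the uniform metric $\bar d_p$ is complete, because the $p$-Wasserstein space over $\mathbb{R}$ is complete and the supremum metric preserves completeness. $\mathcal{T}^\pi$ maps this space to itself when costs are bounded. Since $\gamma<1$, Banach's theorem gives a unique fixed point and convergence of the iterates $(\mathcal{T}^\pi)^k Z$ to it at a geometric rate. The true return distribution $Z^\pi$ satisfies the distributional Bellman equation by conditioning on the first transition, so it is this fixed point.
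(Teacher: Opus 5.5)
The paper gives no proof of its own and simply cites Bellemare et al.\ (2017); your argument is correct and is essentially the standard proof from that source: scale by $\gamma$, cancel the shared cost term, and average $W_p^p$ over the next state--action pair before taking the supremum. Your explicit quantile coupling avoids the measurable-selection issue that choosing an optimal coupling state-by-state would raise. Your Banach step also makes explicit the assumptions under which the fixed-point claim and the identification with $Z^\pi$ hold: completeness of the space, that $\mathcal{T}^\pi$ maps it to itself, bounded costs, and uniformly bounded $p$-th moments.
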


\subsection{Sensitivity of CVaR to Distributional Error}

\begin{theorem}
    \label{thm:cvar_error_appendix}
    Let $Z^\pi(s,a)$ be the true cost return distribution and $Z_\psi(s,a)$ be the IQN approximation. If the IQN network satisfies
    \begin{equation}
    \mathbb{E}_{\tau \sim \text{Unif}[0,1]} \left[ \left| F^{-1}_{Z^\pi(s,a)}(\tau) - Z_\psi(s,a; \tau) \right|^2 \right] \leq \delta^2,
    \end{equation}
    then for CVaR at level $\epsilon$:
    \begin{equation}
    \left| \text{CVaR}_\epsilon(Z^\pi(s,a)) - \text{CVaR}_\epsilon(Z_\psi(s,a)) \right| \leq \frac{\delta}{\sqrt{\epsilon}}.
    \end{equation}
\end{theorem}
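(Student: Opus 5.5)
The plan is to work directly with the quantile representation of CVaR from the background section and reduce the claim to a single Cauchy--Schwarz estimate. Write $q(\tau) := F^{-1}_{Z^\pi(s,a)}(\tau)$ and $\hat q(\tau) := Z_\psi(s,a;\tau)$. We interpret $\mathrm{CVaR}_\epsilon(Z_\psi(s,a))$ as the CVaR of the distribution whose quantile function is $\hat q$, i.e. $\frac{1}{\epsilon}\int_{1-\epsilon}^1 \hat q(\tau)\,d\tau$.

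If $\hat q$ happens to be non-monotone (an IQN output need not be), we use this integral as the definition, since it is exactly the population counterpart of the estimator in Eq.~\ref{eq:cvar_estimate}. Alternatively one can pass to the monotone rearrangement of $\hat q$; rearrangement does not increase the $L^2$ distance to the monotone $q$, so the bound survives either way. This interpretive step is the only delicate point, and I would state it explicitly before the computation.

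With both CVaRs written as tail integrals over the same interval, subtract them:
\begin{equation*}
\left|\mathrm{CVaR}_\epsilon(Z^\pi) - \mathrm{CVaR}_\epsilon(Z_\psi)\right| = \frac{1}{\epsilon}\left|\int_{1-\epsilon}^{1} \bigl(q(\tau)-\hat q(\tau)\bigr)\,d\tau\right| \le \frac{1}{\epsilon}\int_{0}^{1} \mathbb{I}_{\{\tau \ge 1-\epsilon\}}\,\bigl|q(\tau)-\hat q(\tau)\bigr|\,d\tau .
\end{equation*}
Next apply Cauchy--Schwarz on $[0,1]$ with Lebesgue measure, which is the law of $\tau\sim\mathrm{Unif}[0,1]$, to the product of the indicator and $|q-\hat q|$. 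This gives
\begin{equation*}
\int_{0}^{1} \mathbb{I}_{\{\tau \ge 1-\epsilon\}}\,\bigl|q(\tau)-\hat q(\tau)\bigr|\,d\tau \le \Bigl(\int_0^1 \mathbb{I}_{\{\tau\ge 1-\epsilon\}}\,d\tau\Bigr)^{1/2}\Bigl(\mathbb{E}_\tau|q(\tau)-\hat q(\tau)|^2\Bigr)^{1/2} \le \sqrt{\epsilon}\,\delta .
\end{equation*}
Dividing by $\epsilon$ yields $\delta/\sqrt{\epsilon}$, which is the claim. The argument is pointwise in $(s,a)$, so no uniformity over state-action pairs is needed.

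I expect no real obstacle in the estimate itself; it is a two-line inequality. The main care is in the setup. First, one must justify that $\mathrm{CVaR}$ of the learned distribution equals the tail integral of $\hat q$, via the monotonicity or rearrangement remark above. Second, one must ensure integrability: $q$ and $\hat q$ must lie in $L^2[0,1]$, which follows from the hypothesis $\delta<\infty$ together with $q\in L^2$ (bounded costs and $\gamma<1$ make $Z^\pi$ bounded). A slightly sharper bound is available by restricting the $L^2$ error to the tail interval $[1-\epsilon,1]$, but the statement only requires the global version.
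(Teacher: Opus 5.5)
Your proof is correct and follows essentially the paper's argument: the paper also writes both CVaRs as tail integrals of the quantile functions, applies the triangle inequality and then Cauchy--Schwarz on $[1-\epsilon,1]$, and enlarges the squared-error integral to $[0,1]$ to obtain $\delta/\sqrt{\epsilon}$; your indicator-weighted Cauchy--Schwarz on $[0,1]$ is the same step written differently. Your remarks on possibly non-monotone IQN outputs (monotone rearrangement, integrability) add care the paper omits, since it simply takes $\frac{1}{\epsilon}\int_{1-\epsilon}^1 Z_\psi(s,a;\tau)\,d\tau$ as the definition of the learned CVaR.
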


\begin{proof}
The CVaR at level $\epsilon$ is defined via the quantile function as:
\begin{equation}
\text{CVaR}_\epsilon(Z) = \frac{1}{\epsilon} \int_{1-\epsilon}^1 F^{-1}_Z(\tau) \, d\tau.
\end{equation}

For the IQN approximation, we denote $Z_\psi(s,a; \tau)$ as the network's estimate of the $\tau$-th quantile. We bound the absolute difference in CVaR estimates:
\begin{align*}
&\left| \text{CVaR}_\epsilon(Z^\pi(s,a)) - \text{CVaR}_\epsilon(Z_\psi(s,a)) \right| \\
&= \left| \frac{1}{\epsilon} \int_{1-\epsilon}^1 F^{-1}_{Z^\pi(s,a)}(\tau) \, d\tau - \frac{1}{\epsilon} \int_{1-\epsilon}^1 Z_\psi(s,a; \tau) \, d\tau \right| \\
&= \frac{1}{\epsilon} \left| \int_{1-\epsilon}^1 \left( F^{-1}_{Z^\pi(s,a)}(\tau) - Z_\psi(s,a; \tau) \right) d\tau \right| \\
&\leq \frac{1}{\epsilon} \int_{1-\epsilon}^1 \left| F^{-1}_{Z^\pi(s,a)}(\tau) - Z_\psi(s,a; \tau) \right| d\tau.
\end{align*}

Applying the Cauchy-Schwarz inequality:
\begin{align*}
&\leq \frac{1}{\epsilon} \sqrt{\int_{1-\epsilon}^1 1 \, d\tau} \cdot \sqrt{\int_{1-\epsilon}^1 \left| F^{-1}_{Z^\pi(s,a)}(\tau) - Z_\psi(s,a; \tau) \right|^2 d\tau} \\
&= \frac{1}{\epsilon} \sqrt{\epsilon} \cdot \sqrt{\int_{1-\epsilon}^1 \left| F^{-1}_{Z^\pi(s,a)}(\tau) - Z_\psi(s,a; \tau) \right|^2 d\tau} \\
&= \frac{1}{\sqrt{\epsilon}} \sqrt{\int_{1-\epsilon}^1 \left| F^{-1}_{Z^\pi(s,a)}(\tau) - Z_\psi(s,a; \tau) \right|^2 d\tau}.
\end{align*}

Since $[1-\epsilon, 1] \subseteq [0,1]$ and the integrand is non-negative:
\begin{equation}
\int_{1-\epsilon}^1 \left| F^{-1}_{Z^\pi(s,a)}(\tau) - Z_\psi(s,a; \tau) \right|^2 d\tau \leq \int_0^1 \left| F^{-1}_{Z^\pi(s,a)}(\tau) - Z_\psi(s,a; \tau) \right|^2 d\tau \leq \delta^2.
\end{equation}

Therefore:
\begin{equation}
\left| \text{CVaR}_\epsilon(Z^\pi(s,a)) - \text{CVaR}_\epsilon(Z_\psi(s,a)) \right| \leq \frac{\delta}{\sqrt{\epsilon}}.
\end{equation}
\end{proof}

\begin{figure}[t]
    \centering
    % Row 1, Column 1
    \begin{minipage}{0.24\textwidth}
        \centering
        \includegraphics[width=\linewidth]{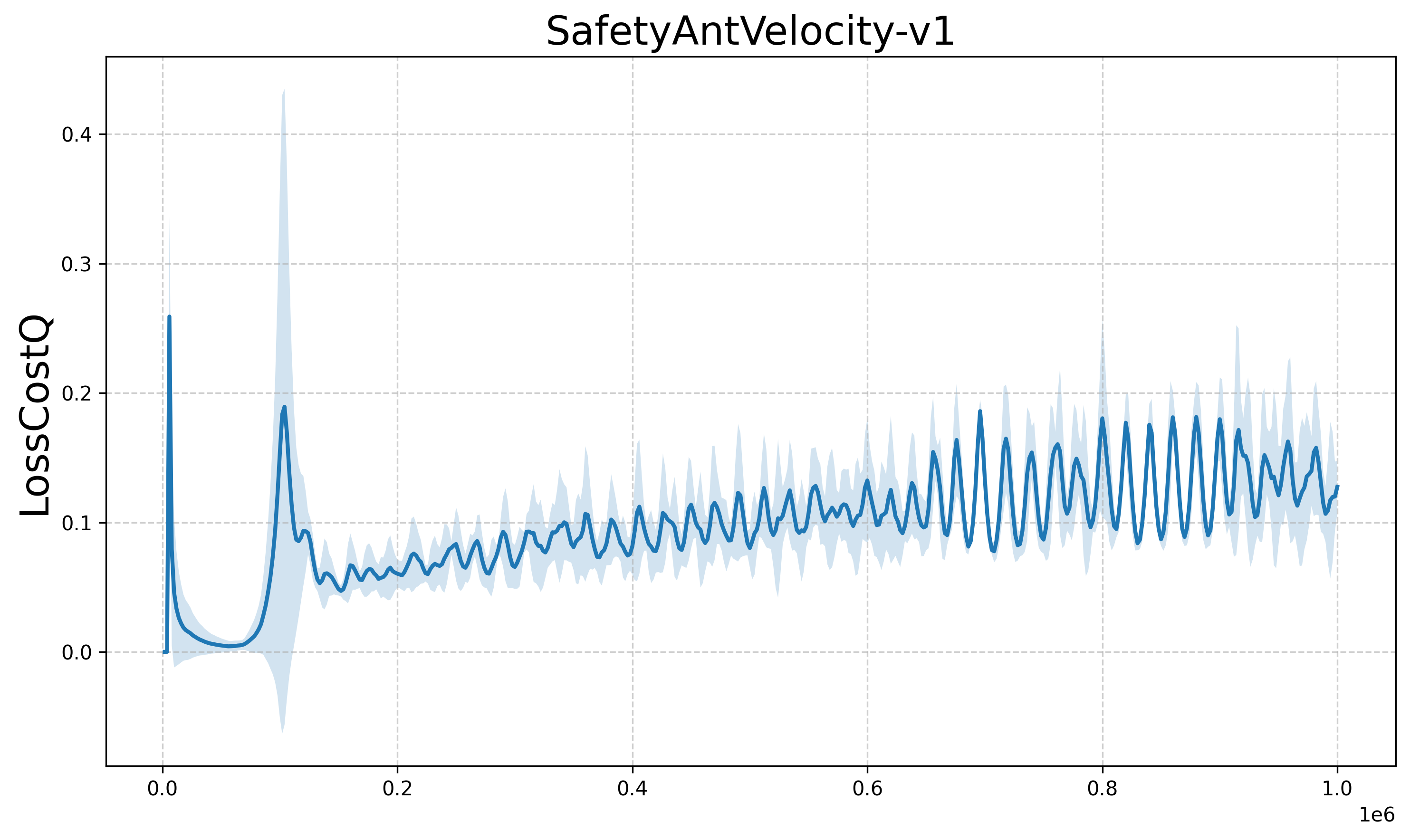}
    \end{minipage}
    \hfill
    % Row 1, Column 2
    \begin{minipage}{0.24\textwidth}
        \centering
        \includegraphics[width=\linewidth]{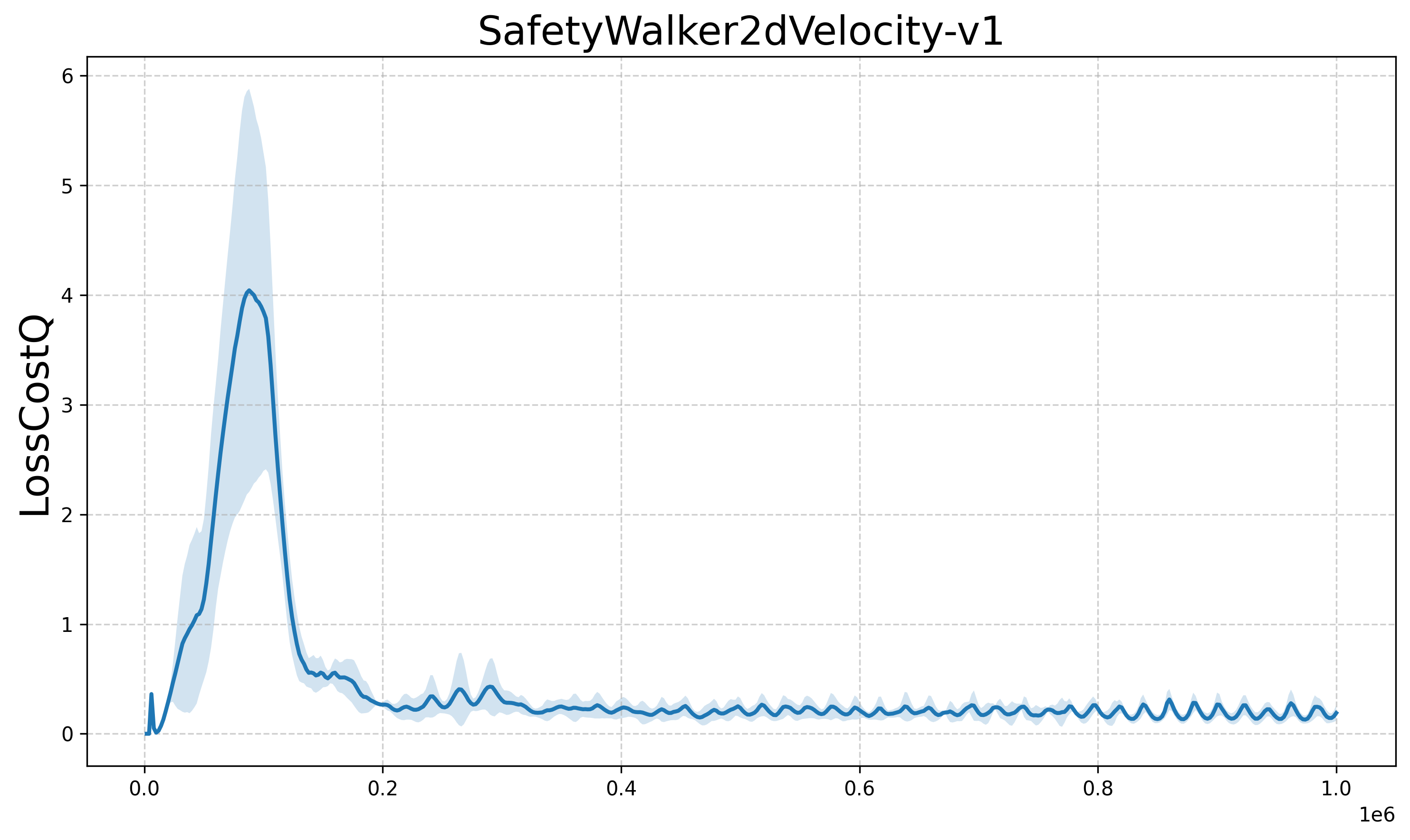}
    \end{minipage}
    \hfill
    % Row 1, Column 3
    \begin{minipage}{0.24\textwidth}
        \centering
        \includegraphics[width=\linewidth]{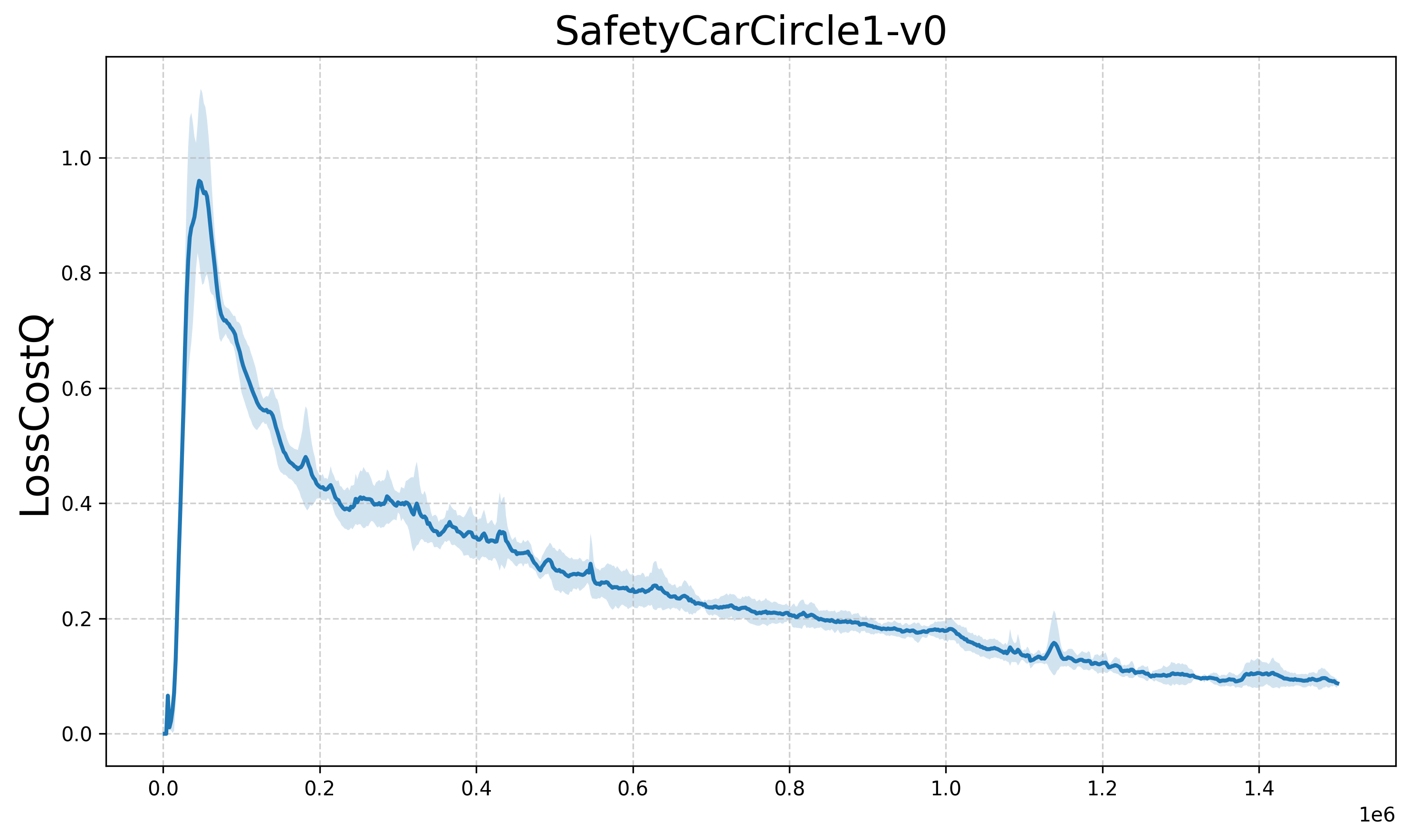}
    \end{minipage}
    \hfill
    % Row 1, Column 4
    \begin{minipage}{0.24\textwidth}
        \centering
        \includegraphics[width=\linewidth]{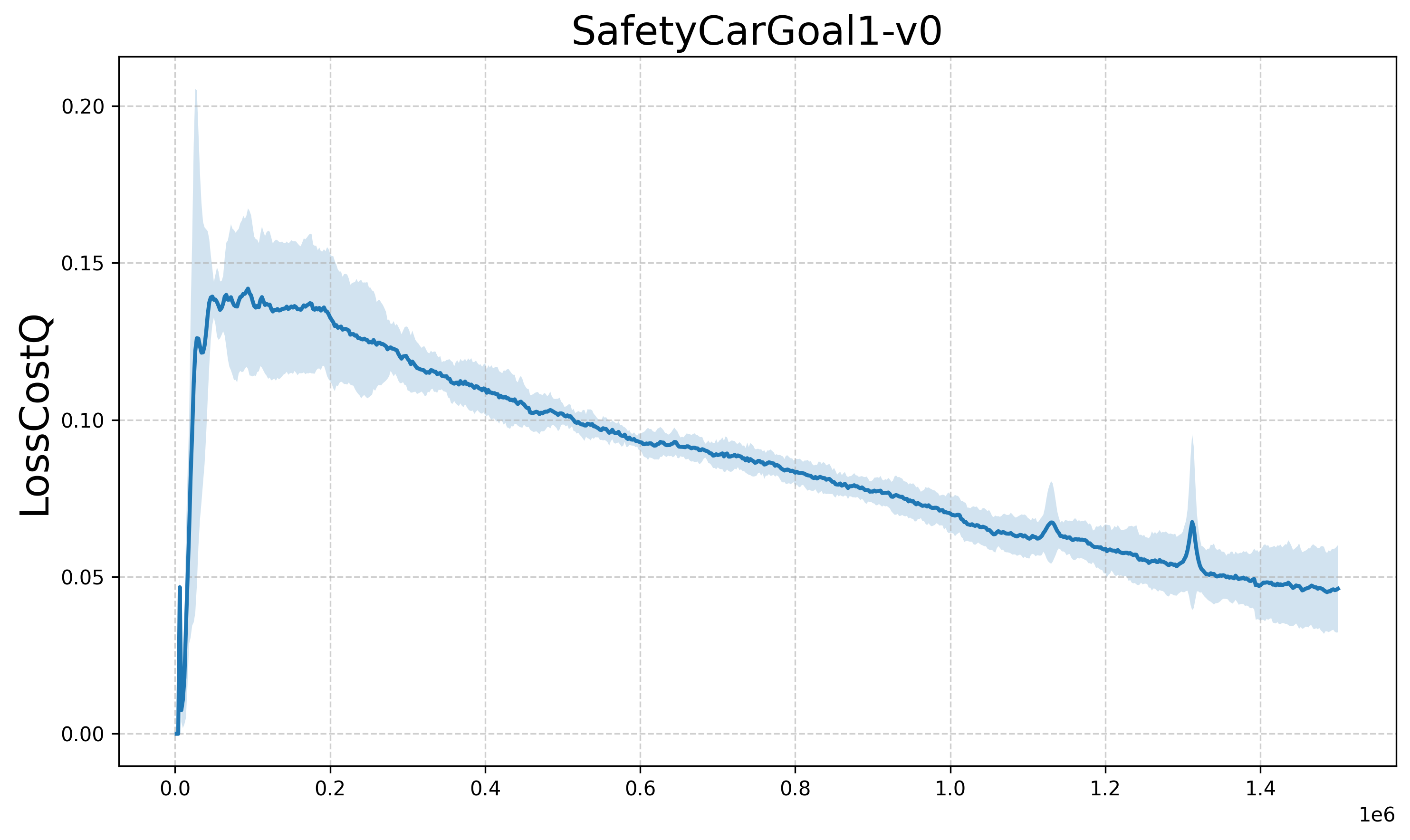}
    \end{minipage}
    
    \caption{\textbf{Convergence of Cost Critic Approximation.} Evolution of the Quantile Huber Loss (averaged over 5 seeds) during training.}
    \label{fig:quantile_loss}
\end{figure}

\begin{corollary}[Safety‑Margin Adjustment for CVaR Constraints]
\label{cor:safety_bound}
Assume the finite-sample error in Eq. \eqref{eq:cvar_estimate} is negligible (i.e., $N$ is large). If the learned policy $\pi$ satisfies $\widehat{\text{CVaR}}_\epsilon(Z_\psi) \leq \beta$, and the distributional critic achieves mean squared error $\delta^2 = \mathbb{E}_{\tau \sim \text{Unif}[0,1]} [ | F^{-1}_{Z^\pi}(\tau) - Z_\psi(\tau) |^2 ]$, then by Theorem \ref{thm:cvar_error}, the true CVaR satisfies:
\begin{equation}
    \text{CVaR}_\epsilon(Z^\pi) \leq \beta + \frac{\delta}{\sqrt{\epsilon}}.
\end{equation}
Thus, to guarantee $\text{CVaR}_\epsilon(Z^\pi) \leq \beta$, it suffices to set $\beta' = \beta - \frac{\delta}{\sqrt{\epsilon}}$ and enforce $\widehat{\text{CVaR}}_\epsilon(Z_\psi) \leq \beta'$, provided $\beta > \frac{\delta}{\sqrt{\epsilon}}$.
\end{corollary}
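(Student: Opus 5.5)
The corollary follows from Theorem~\ref{thm:cvar_error} (equivalently Theorem~\ref{thm:cvar_error_appendix}) by rearranging, using the large-$N$ assumption to identify the Monte Carlo estimator with the exact CVaR of $Z_\psi$.

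\textbf{Step 1.} I first dispose of the estimator. Since $\tau_k \sim \mathcal{U}[1-\epsilon,1]$, the average $\frac{1}{N}\sum_k Z^{C,\tau_k}_\psi(s,a)$ in Eq.~\eqref{eq:cvar_estimate} is an unbiased Monte Carlo estimate of $\frac{1}{\epsilon}\int_{1-\epsilon}^1 Z_\psi(s,a;\tau)\,d\tau = \mathrm{CVaR}_\epsilon(Z_\psi)$. The law of large numbers applies provided $Z_\psi(s,a;\cdot)$ is integrable on $[1-\epsilon,1]$. The hypothesis that finite-sample error is negligible then lets me replace $\widehat{\mathrm{CVaR}}_\epsilon(Z_\psi)$ by $\mathrm{CVaR}_\epsilon(Z_\psi)$, so the assumption reads $\mathrm{CVaR}_\epsilon(Z_\psi)\le\beta$.

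\textbf{Step 2.} Next I apply Theorem~\ref{thm:cvar_error} with the stated $\delta^2$. This gives $|\mathrm{CVaR}_\epsilon(Z^\pi)-\mathrm{CVaR}_\epsilon(Z_\psi)|\le \delta/\sqrt{\epsilon}$. Keeping only the one-sided half,
\[
\mathrm{CVaR}_\epsilon(Z^\pi)\le \mathrm{CVaR}_\epsilon(Z_\psi)+\frac{\delta}{\sqrt{\epsilon}}\le \beta+\frac{\delta}{\sqrt{\epsilon}}.
\]
This is the first claim.

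\textbf{Step 3.} For the margin statement, I run the same chain with $\beta'$ in place of $\beta$. Enforcing $\widehat{\mathrm{CVaR}}_\epsilon(Z_\psi)\le\beta'$ yields
\[
\mathrm{CVaR}_\epsilon(Z^\pi)\le \beta'+\frac{\delta}{\sqrt{\epsilon}}=\beta .
\]
The condition $\beta>\delta/\sqrt{\epsilon}$ only ensures $\beta'>0$. This matters because costs are non-negative: with $\beta'\le 0$ the tightened constraint would typically be infeasible or vacuous.

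\textbf{Main obstacle.} The only nontrivial point is making Step~1 precise, since "negligible" is informal. If a quantitative version were wanted, I would either add an explicit slack $\eta_N$ with $\mathrm{CVaR}_\epsilon(Z_\psi)\le\beta+\eta_N$, or bound the error via Hoeffding's inequality, assuming the quantile outputs are bounded on the tail. The resulting guarantee would then be $\beta+\eta_N+\delta/\sqrt{\epsilon}$ with high probability. Everything else is immediate from Theorem~\ref{thm:cvar_error}.
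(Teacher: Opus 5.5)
Your proposal is correct and takes the same approach as the paper, which simply invokes Theorem~\ref{thm:cvar_error}. With large $N$, $\widehat{\mathrm{CVaR}}_\epsilon(Z_\psi)$ is identified with $\frac{1}{\epsilon}\int_{1-\epsilon}^1 Z_\psi(\tau)\,d\tau$; the paper then keeps the one-sided half of the absolute-value bound and substitutes $\beta'$. Your remark that the condition $\beta>\delta/\sqrt{\epsilon}$ is not needed for the implication, and only keeps $\beta'$ positive, is accurate.
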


Corollary~\ref{cor:safety_bound} suggests using a tightened threshold $\beta' = \beta - \delta/\sqrt{\epsilon}$ to theoretically guarantee safety against approximation errors. However, in our experiments, we utilize the target threshold $\beta$ directly without the adjustment term. This design choice is justified by two factors: (1) The theoretical bound is a worst-case estimate, and calculating the exact upper bound $\delta$ requires access to the unknown ground-truth distribution $Z^\pi$; (2) Empirically, the approximation error $\delta$ remains sufficiently small throughout training. As illustrated in Figure~\ref{fig:quantile_loss}, the Quantile Huber Loss, which serves as a proxy for the approximation error, rapidly converges to a low magnitude across diverse environments. This indicates that the distributional critic $Z_\psi$ closely tracks $Z^\pi$, allowing the adaptive Lagrangian to maintain safety without requiring an artificially conservative threshold $\beta'$.

\begin{corollary}
    \label{cor:cvar_vs_expected_appendix}
    Consider a policy $\pi$ with non-negative true cost distribution $Z^\pi$ and IQN approximation $Z_\psi$ achieving mean squared quantile error $\delta^2$ as in Theorem~\ref{thm:cvar_error}. For a target cost limit $\beta$ and risk level $\epsilon \in (0,1)$:
    
    \begin{enumerate}
        \item \textbf{CVaR-based constraint:} If $\widehat{\mathrm{CVaR}}_\epsilon(Z_\psi) \leq \beta$, then by Corollary \ref{cor:safety_bound}:
        \begin{equation}
            \mathrm{CVaR}_\epsilon(Z^\pi) \leq \beta + \frac{\delta}{\sqrt{\epsilon}}.
            % \label{eq:cvar_constraint_bound}
        \end{equation}
        
        \item \textbf{Expected cost constraint:} If $\mathbb{E}[Z_\psi] \leq \beta$, then:
        \begin{equation}
            \mathrm{CVaR}_\epsilon(Z^\pi) \leq \frac{\beta + \delta}{\epsilon}.
            % \label{eq:expected_constraint_bound}
        \end{equation}
    \end{enumerate}
    
    The CVaR-based constraint yields a strictly tighter bound on true tail risk whenever
    \begin{equation}
        \beta + \frac{\delta}{\sqrt{\epsilon}} < \frac{\beta + \delta}{\epsilon},
        \label{eq:tighter_condition}
    \end{equation}
    which holds for all $\beta > 0$, $\delta > 0$, and $\epsilon \in (0,1)$.
\end{corollary}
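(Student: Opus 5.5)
The plan is to treat the three claims separately. Each one reduces to an integral identity over the quantile function combined with Cauchy--Schwarz, as in the proof of Theorem~\ref{thm:cvar_error_appendix}. Throughout, I identify $\mathbb{E}[Z_\psi]$ with $\int_0^1 Z_\psi(s,a;\tau)\,d\tau$ and $\widehat{\mathrm{CVaR}}_\epsilon(Z_\psi)$ with $\frac{1}{\epsilon}\int_{1-\epsilon}^1 Z_\psi(s,a;\tau)\,d\tau$. The latter identification uses the negligible finite-sample assumption of Corollary~\ref{cor:safety_bound}.

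\textbf{CVaR-based constraint.} This is immediate. Theorem~\ref{thm:cvar_error_appendix} gives
\[
\mathrm{CVaR}_\epsilon(Z^\pi)\le \mathrm{CVaR}_\epsilon(Z_\psi)+\delta/\sqrt{\epsilon},
\]
and by hypothesis $\mathrm{CVaR}_\epsilon(Z_\psi)=\widehat{\mathrm{CVaR}}_\epsilon(Z_\psi)\le\beta$. This is exactly the content of Corollary~\ref{cor:safety_bound}, so I would simply cite it.

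\textbf{Expected cost constraint.} This has two steps.
\begin{enumerate}
\item Transfer the mean bound from $Z_\psi$ to $Z^\pi$. Write $\mathbb{E}[Z^\pi]=\int_0^1 F^{-1}_{Z^\pi}(\tau)\,d\tau$. Applying Cauchy--Schwarz on the whole interval $[0,1]$, the same argument as in the theorem but with $\epsilon=1$, gives $|\mathbb{E}[Z^\pi]-\mathbb{E}[Z_\psi]|\le\delta$. Hence $\mathbb{E}[Z^\pi]\le\beta+\delta$.
\item Bound the tail average by the mean, using non-negativity. Since $Z^\pi\ge 0$, its quantile function is non-negative on $[0,1]$. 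Extending the integration domain from $[1-\epsilon,1]$ to $[0,1]$ can therefore only increase the integral, so
\[
\mathrm{CVaR}_\epsilon(Z^\pi)=\frac{1}{\epsilon}\int_{1-\epsilon}^1 F^{-1}_{Z^\pi}(\tau)\,d\tau\le\frac{1}{\epsilon}\int_0^1 F^{-1}_{Z^\pi}(\tau)\,d\tau=\frac{\mathbb{E}[Z^\pi]}{\epsilon}\le\frac{\beta+\delta}{\epsilon}.
\]
\end{enumerate}
This is the worst-case tail referred to in the statement: it is attained when all mass sits in the top $\epsilon$-fraction.

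\textbf{Comparison of the bounds.} For $\epsilon\in(0,1)$ we have $1<1/\sqrt{\epsilon}<1/\epsilon$. So $\beta<\beta/\epsilon$ whenever $\beta>0$, and $\delta/\sqrt{\epsilon}<\delta/\epsilon$ whenever $\delta>0$. Adding these two inequalities gives \eqref{eq:tighter_condition}. In fact strictness only needs one of $\beta,\delta$ to be positive.

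\textbf{Main obstacle.} The only delicate point is the modelling convention rather than the algebra: one must justify reading $\mathbb{E}[Z_\psi]$ and $\widehat{\mathrm{CVaR}}_\epsilon$ as integrals of the IQN quantile curve $Z_\psi(s,a;\cdot)$. Notably, $Z_\psi(s,a;\cdot)$ need not be monotone or non-negative. I would note that the argument above never uses non-negativity or monotonicity of $Z_\psi$; the tail-to-mean step relies only on non-negativity of the true distribution $Z^\pi$, which is assumed.
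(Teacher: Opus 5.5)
Your proposal is correct and follows the paper's proof step for step. Part~1 cites Corollary~\ref{cor:safety_bound}; Part~2 uses Cauchy--Schwarz on $[0,1]$ to get $|\mathbb{E}[Z^\pi]-\mathbb{E}[Z_\psi]|\le\delta$ and then $\mathrm{CVaR}_\epsilon(Z^\pi)\le \mathbb{E}[Z^\pi]/\epsilon$; and the strict comparison splits the difference into a $\beta$-term and a $\delta$-term. The only divergence is that you derive the tail-to-mean bound directly from non-negativity of $F^{-1}_{Z^\pi}$ on $[0,1-\epsilon]$, whereas the paper only asserts it via the extremal two-point distribution (mass $1-\epsilon$ at $0$, mass $\epsilon$ at $\mu/\epsilon$), so your version supplies the justification the paper leaves implicit.
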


\begin{proof}
\textbf{Part 1:} Follows directly from Theorem~\ref{thm:cvar_error} and Corollary~\ref{cor:safety_bound}.

\textbf{Part 2:} The proof proceeds in three steps.

\textit{Step 1: MSE bounds the error in expectation.} 
By the Cauchy–Schwarz inequality,
\begin{align*}
    \left|\mathbb{E}[Z^\pi] - \mathbb{E}[Z_\psi]\right| 
    &= \left| \int_0^1 \bigl( F^{-1}_{Z^\pi}(\tau) - Z_\psi(\tau) \bigr) \, d\tau \right| \\
    &\leq \int_0^1 \bigl| F^{-1}_{Z^\pi}(\tau) - Z_\psi(\tau) \bigr| \, d\tau \\
    &\leq \sqrt{\int_0^1 1 \, d\tau} \; \sqrt{\int_0^1 \bigl| F^{-1}_{Z^\pi}(\tau) - Z_\psi(\tau) \bigr|^2 \, d\tau}
    = \delta .
\end{align*}

\textit{Step 2: Worst‑case CVaR for a given expectation.}
For any non‑negative random variable $Z$ with $\mathbb{E}[Z] = \mu$, the maximal possible $\mathrm{CVaR}_\epsilon(Z)$ is attained by the two‑point distribution that places mass $(1-\epsilon)$ at $0$ and mass $\epsilon$ at $\mu/\epsilon$. Consequently,
\begin{equation}
    \mathrm{CVaR}_\epsilon(Z) \leq \frac{\mathbb{E}[Z]}{\epsilon}.
    \label{eq:worst_case_cvar}
\end{equation}

\textit{Step 3: Combining the bounds.}
If $\mathbb{E}[Z_\psi] \leq \beta$, Step 1 implies $\mathbb{E}[Z^\pi] \leq \mathbb{E}[Z_\psi] + \delta \leq \beta + \delta$. 
Applying the worst‑case bound \eqref{eq:worst_case_cvar} to $Z^\pi$ gives
\begin{equation*}
    \mathrm{CVaR}_\epsilon(Z^\pi) \leq \frac{\mathbb{E}[Z^\pi]}{\epsilon} \leq \frac{\beta + \delta}{\epsilon}.
\end{equation*}

To show the CVaR-based bound is strictly tighter, we verify:
\begin{align*}
    \frac{\beta + \delta}{\epsilon} - \left(\beta + \frac{\delta}{\sqrt{\epsilon}}\right) &= \frac{\beta + \delta}{\epsilon} - \beta - \frac{\delta}{\sqrt{\epsilon}} \\
    &= \beta\left(\frac{1}{\epsilon} - 1\right) + \delta\left(\frac{1}{\epsilon} - \frac{1}{\sqrt{\epsilon}}\right) \\
    &= \beta\left(\frac{1-\epsilon}{\epsilon}\right) + \delta\left(\frac{1 - \sqrt{\epsilon}}{\epsilon}\right).
\end{align*}
Since $\beta > 0$, $\delta > 0$, and $\epsilon \in (0,1)$, both terms are strictly positive, establishing strict inequality.

\end{proof}

\begin{theorem}[CVaR Provides Stronger Constraint Violation Signals]
\label{thm:cvar_signal_strength}
Let $\{C_i\}_{i=1}^N$ be a sliding window of $N$ recent episodic costs. Let $\hat{\beta}_\mathrm{CVaR} = \mathrm{CVaR}_\epsilon(\{C_i\})$ and $\hat{\beta}_\mathrm{EXP} = \mathbb{E}[\{C_i\}]$ be empirical estimates of CVaR and expected cost, respectively. For any safety threshold $\beta > 0$ and risk level $\epsilon \in (0,1)$, the constraint violation signals satisfy:

\textbf{Signal strength:} The CVaR-based violation signal is always algebraically greater than or equal to the expected-cost signal:
\begin{equation}
    \hat{\beta}_\mathrm{CVaR} - \beta \geq \hat{\beta}_\mathrm{EXP} - \beta,
    \label{eq:signal_strength}
\end{equation}
with strict inequality whenever the costs in the window are not all identical. Specifically, the relationship is given by the identity:
\begin{equation}
    (\hat{\beta}_\mathrm{CVaR} - \beta) - (\hat{\beta}_\mathrm{EXP} - \beta) = \left(1 - \frac{k}{N}\right) (\hat{\beta}_\mathrm{CVaR} - \hat{\beta}_\mathrm{body}) \geq 0,
    \label{eq:signal_decomposition}
\end{equation}
where $k = \lceil \epsilon N \rceil$ is the number of tail samples and $\hat{\beta}_\mathrm{body}$ is the mean of the lower $(N-k)$ ordered costs.

Consequently, in the Lagrange multiplier update $\lambda_{t+1} = \max(0, \lambda_t + \eta(\hat{\beta} - \beta))$, the CVaR-based update always yields a larger positive adjustment (or a smaller negative decay) to $\lambda$ compared to the expected cost. When costs exceed the threshold ($\hat{\beta} > \beta$), CVaR provides a stronger penalty increase; when below threshold ($\hat{\beta} < \beta$), CVaR results in a slower safety relaxation.
\end{theorem}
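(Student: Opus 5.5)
Order the window costs as $C_{(1)} \le C_{(2)} \le \dots \le C_{(N)}$ and use the natural empirical definitions:
\[
\hat{\beta}_\mathrm{CVaR} = \frac{1}{k}\sum_{j=N-k+1}^{N} C_{(j)},
\qquad
\hat{\beta}_\mathrm{body} = \frac{1}{N-k}\sum_{j=1}^{N-k} C_{(j)}.
\]
Here $k=\lceil \epsilon N\rceil$, so $1\le k\le N$. When $k=N$ the body is empty, the factor $1-k/N$ vanishes, and the identity holds trivially; so I assume $k<N$ from now on. Then $\hat{\beta}_\mathrm{EXP}$ is exactly the mixture
\[
\hat{\beta}_\mathrm{EXP} = \frac{k}{N}\hat{\beta}_\mathrm{CVaR} + \Bigl(1-\frac{k}{N}\Bigr)\hat{\beta}_\mathrm{body}.
\]
Subtracting this from $\hat{\beta}_\mathrm{CVaR}$ gives $(1-k/N)(\hat{\beta}_\mathrm{CVaR}-\hat{\beta}_\mathrm{body})$. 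Since $\beta$ cancels in the left-hand side of \eqref{eq:signal_decomposition}, this is precisely the claimed identity.

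Nonnegativity follows from the ordering. Every tail element $C_{(j)}$ with $j>N-k$ is at least $C_{(N-k)}$, and every body element is at most $C_{(N-k)}$. Hence
\[
\hat{\beta}_\mathrm{CVaR} \ge C_{(N-k)} \ge \hat{\beta}_\mathrm{body},
\]
which yields \eqref{eq:signal_strength}.

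The step that needs care is strictness. ``Costs not all identical'' is not quite enough when $k=N$, since $1-k/N=0$ makes the difference zero. I will state the caveat that strictness requires $k<N$, i.e.\ $\epsilon N \le N-1$. Even then, strictness needs $\hat{\beta}_\mathrm{CVaR}>\hat{\beta}_\mathrm{body}$, and equality of the two averages forces
\[
C_{(N-k+1)}=\dots=C_{(N)}=C_{(N-k)}=C_{(1)}=\dots=C_{(N-k)},
\]
because a tail average equal to its minimum and a body average equal to its maximum both force constancy. So all costs coincide. Conversely, non-identical costs give a strict gap. I will prove this by contrapositive, noting the $k<N$ requirement explicitly.

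Finally, the consequence for the multiplier follows from monotonicity of $x\mapsto\max(0,\lambda_t+\eta x)$ in $x$ for $\eta>0$. Plugging in the larger signal $\hat{\beta}_\mathrm{CVaR}-\beta$ yields $\lambda_{t+1}$ at least as large as with $\hat{\beta}_\mathrm{EXP}-\beta$. I will then read off the two regimes: a larger increase above the threshold, and a smaller decrease (slower relaxation) below it.
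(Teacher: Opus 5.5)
Your proposal is correct and follows the paper's proof essentially step for step: order statistics, writing $\hat{\beta}_\mathrm{EXP}$ as the $k/N$-weighted mixture of the tail and body means, then comparing the top-$k$ and bottom-$(N-k)$ averages, with your monotonicity remark for the $\lambda$ update spelling out what the paper leaves implicit. Your $k<N$ caveat is sharper than the paper's proof, which asserts that $0<\epsilon<1$ forces $N>k$; this fails whenever $\epsilon>1-1/N$ (e.g.\ $N=10$, $\epsilon=0.95$ gives $k=\lceil 9.5\rceil=N$), and then $\hat{\beta}_\mathrm{CVaR}=\hat{\beta}_\mathrm{EXP}$, so strictness breaks even for non-identical costs, exactly as you note.
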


\begin{proof}
Let $C_{(1)} \leq C_{(2)} \leq \cdots \leq C_{(N)}$ be the order statistics of the costs. Let $k = \lceil \epsilon N \rceil$ denote the number of samples in the risk tail. The empirical CVaR and body mean are defined as:
\[
\hat{\beta}_\mathrm{CVaR} = \frac{1}{k} \sum_{i=N-k+1}^{N} C_{(i)},
\quad
\hat{\beta}_\mathrm{body} = \frac{1}{N-k} \sum_{i=1}^{N-k} C_{(i)}.
\]
The expected cost $\hat{\beta}_\mathrm{EXP}$ is the arithmetic mean of all samples, which can be decomposed into a weighted sum of the tail and body means:
\[
\hat{\beta}_\mathrm{EXP} = \frac{1}{N}\sum_{i=1}^{N} C_{(i)} 
= \frac{1}{N} \left( \sum_{i=1}^{N-k} C_{(i)} + \sum_{i=N-k+1}^{N} C_{(i)} \right)
= \frac{N-k}{N}\hat{\beta}_\mathrm{body} + \frac{k}{N}\hat{\beta}_\mathrm{CVaR}.
\]
We analyze the difference between the violation signals:
\begin{align*}
(\hat{\beta}_\mathrm{CVaR} - \beta) - (\hat{\beta}_\mathrm{EXP} - \beta) &= \hat{\beta}_\mathrm{CVaR} - \hat{\beta}_\mathrm{EXP} \\
&= \hat{\beta}_\mathrm{CVaR} - \left[ \left(1 - \frac{k}{N}\right)\hat{\beta}_\mathrm{body} + \frac{k}{N}\hat{\beta}_\mathrm{CVaR} \right] \\
&= \left(1 - \frac{k}{N}\right)\hat{\beta}_\mathrm{CVaR} - \left(1 - \frac{k}{N}\right)\hat{\beta}_\mathrm{body} \\
&= \left(1 - \frac{k}{N}\right) \left( \hat{\beta}_\mathrm{CVaR} - \hat{\beta}_\mathrm{body} \right).
\end{align*}
Since $0 < \epsilon < 1$, we have $N > k$, so the term $(1 - k/N)$ is strictly positive.
Furthermore, because the costs are ordered $C_{(1)} \leq \dots \leq C_{(N)}$, the average of the top $k$ values ($\hat{\beta}_\mathrm{CVaR}$) must be greater than or equal to the average of the bottom $N-k$ values ($\hat{\beta}_\mathrm{body}$), i.e., $\hat{\beta}_\mathrm{CVaR} \geq \hat{\beta}_\mathrm{body}$. Equality holds if and only if all costs are identical ($C_{(1)} = C_{(N)}$).

Therefore, the difference is non-negative:
\[
\hat{\beta}_\mathrm{CVaR} - \beta \geq \hat{\beta}_\mathrm{EXP} - \beta.
\]
\end{proof}

\begin{theorem}
    \label{app:cvar_gpd_violation_appendix}
    Let $Z^\pi(s,a)$ be the random variable representing the true cumulative discounted cost return of policy $\pi$ starting at state $s$ and taking action $a$. Fix a confidence level $\epsilon \in (0,1)$ and a safety threshold $\beta \in \mathbb{R}$.
    Let $\mathrm{VaR}_\epsilon(Z^\pi) = \inf \{ z \in \mathbb{R} : \Pr(Z^\pi \le z) \ge \epsilon \}$ denote the Value-at-Risk.
    
    Assume that the tail of the true cost distribution follows a Generalized Pareto Distribution (GPD) with shape parameter $\nu$ and scale parameter $\sigma$. Specifically:
    \begin{enumerate}
        \item Let $u := \mathrm{VaR}_\epsilon(Z^\pi)$. The exceedance follows a GPD: $Z^\pi(s,a) - u \mid Z^\pi(s,a) > u \sim \mathrm{GPD}(\nu, \sigma)$.
        \item The shape parameter satisfies $0 \le \nu < 1$ (ensuring a finite mean).
        \item The true risk satisfies the safety constraint: $\mathrm{CVaR}_\epsilon(Z^\pi(s,a)) \le \beta$.
    \end{enumerate}
    Then, the probability of the true cost violating the safety threshold $\beta$ is bounded by:
    \[
    \Pr(Z^\pi(s,a) > \beta) \le (1-\epsilon) \, \gamma(\nu),
    \]
    where the scaling factor $\gamma(\nu)$ depends on the tail heaviness:
    \[
    \gamma(\nu) = 
    \begin{cases}
    e^{-1}, & \text{if } \nu = 0 \text{ (Exponential tail)}, \\[4pt]
    (1-\nu)^{1/\nu}, & \text{if } 0 < \nu < 1 \text{ (Heavy tail)}.
    \end{cases}
    \]
\end{theorem}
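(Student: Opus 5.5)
The plan is to reduce everything to a one-dimensional computation on the GPD exceedance. Write $X := Z^\pi(s,a) - u$ conditioned on $Z^\pi(s,a) > u$, so by assumption 1, $X \sim \mathrm{GPD}(\nu,\sigma)$. The survival function is $\bar G(x) = (1+\nu x/\sigma)^{-1/\nu}$ for $0<\nu<1$ and $\bar G(x) = e^{-x/\sigma}$ for $\nu = 0$. The mean is $\mathbb{E}[X] = \sigma/(1-\nu)$, which is finite by assumption 2. Three ingredients are needed:
\begin{enumerate}
\item a bound on the tail mass $\Pr(Z^\pi > u)$;
\item a closed form for $\mathrm{CVaR}_\epsilon$ under the GPD tail;
\item the conversion of the constraint $\mathrm{CVaR}_\epsilon \le \beta$ into a lower bound on $\beta - u$.
\end{enumerate}

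\textbf{Step 1 (tail mass).} From the definition $u = \mathrm{VaR}_\epsilon(Z^\pi) = \inf\{z : \Pr(Z^\pi \le z) \ge \epsilon\}$ and right-continuity of the CDF, we get $\Pr(Z^\pi \le u) \ge \epsilon$. Hence $\Pr(Z^\pi > u) \le 1-\epsilon$.

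\textbf{Step 2 (CVaR in closed form).} I will identify $\mathrm{CVaR}_\epsilon(Z^\pi)$ with the tail conditional expectation beyond this VaR level, $\mathbb{E}[Z^\pi \mid Z^\pi > u]$. This is the conditional-expectation form of the CVaR definition in the Background section, taken with the tail determined by $u$. Under the GPD assumption it equals
\[
u + \mathbb{E}[X] = u + \frac{\sigma}{1-\nu}.
\]

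\textbf{Step 3 (constraint to lower bound).} Assumption 3 now gives $\beta - u \ge \sigma/(1-\nu) > 0$. In particular $\beta > u$, so every violation event lies in the GPD tail.

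\textbf{Step 4 (decompose the violation probability).} Since $\beta > u$,
\[
\Pr(Z^\pi > \beta) = \Pr(Z^\pi > u)\,\Pr(X > \beta - u) \le (1-\epsilon)\,\bar G(\beta - u).
\]
Because $\bar G$ is nonincreasing, Step 3 yields $\bar G(\beta - u) \le \bar G\bigl(\sigma/(1-\nu)\bigr)$. It remains to evaluate this at the mean of the exceedance, case by case:
\begin{itemize}
\item For $\nu = 0$: $\bar G(\sigma) = e^{-1}$.
\item For $0<\nu<1$: $1 + \nu/(1-\nu) = 1/(1-\nu)$, so $\bar G = \bigl((1-\nu)^{-1}\bigr)^{-1/\nu} = (1-\nu)^{1/\nu}$.
\end{itemize}
This is exactly $\gamma(\nu)$. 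The two cases are consistent, since $(1-\nu)^{1/\nu} \to e^{-1}$ as $\nu \to 0$.

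\textbf{Main obstacle.} The delicate step is Step 2: reconciling the CVaR convention with the VaR level. The Background defines $\mathrm{CVaR}_\epsilon$ via the upper $\epsilon$-tail, whereas this theorem places $u$ at the $\epsilon$-quantile, so the tail mass is $1-\epsilon$. I would state explicitly that in this theorem $\mathrm{CVaR}_\epsilon$ means $\mathbb{E}[Z^\pi \mid Z^\pi > \mathrm{VaR}_\epsilon]$, consistent with the conditional-expectation form of the CVaR definition.

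If the quantile-integral form is preferred instead, I would add a continuity assumption at $u$, so that $\Pr(Z^\pi > u) = 1-\epsilon$ and the two forms coincide. Possible atoms at $u$ only help the inequality in Step 1, but they would need care in Step 2.
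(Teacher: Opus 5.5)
Your proposal is correct and follows the paper's proof essentially step for step. Like the paper, you identify $\mathrm{CVaR}_\epsilon(Z^\pi)$ with $u + \sigma/(1-\nu)$, turn the constraint into $\beta - u \ge \sigma/(1-\nu)$, factor $\Pr(Z^\pi > \beta)$ through the exceedance, and evaluate the GPD survival function at the exceedance mean in the two cases. Your version is also slightly more careful than the paper's. The paper asserts $\Pr(Z^\pi > u) = 1-\epsilon$ exactly, which implicitly assumes no atom at $u$, whereas you use only $\Pr(Z^\pi > u) \le 1-\epsilon$. The paper also silently adopts the convention $\mathrm{CVaR}_\epsilon = \mathbb{E}[Z^\pi \mid Z^\pi > \mathrm{VaR}_\epsilon]$, which conflicts with the upper-$\epsilon$-tail definition in the Background, whereas you flag this explicitly.
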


\begin{proof}
Let $Z^\pi$ denote $Z^\pi(s,a)$ for brevity. Let $u := \mathrm{VaR}_\epsilon(Z^\pi)$ such that $\Pr(Z^\pi > u) = 1-\epsilon$.
Define the exceedance random variable $Y := Z^\pi - u \mid Z^\pi > u$, which follows $\mathrm{GPD}(\nu, \sigma)$. We analyze the bound based on the shape parameter $\nu$.

\textbf{Case 1: Light Tail ($\nu = 0$).} 
The GPD reduces to an Exponential distribution with mean $\sigma$.
From the definition of CVaR, we have $\mathrm{CVaR}_\epsilon(Z^\pi) = u + \mathbb{E}[Y] = u + \sigma$.
Imposing the safety constraint $\mathrm{CVaR}_\epsilon(Z^\pi) \le \beta$ implies $u + \sigma \le \beta$, or equivalently $\beta - u \ge \sigma$.
The survival function is $\Pr(Y > y) = \exp(-y/\sigma)$ for $y \ge 0$. Thus:
\[
\Pr(Z^\pi > \beta) = (1-\epsilon) \Pr(Y > \beta - u) = (1-\epsilon) \exp\left(-\frac{\beta - u}{\sigma}\right).
\]
Since $\beta - u \ge \sigma$, the term inside the exponential is $\le -1$. Therefore:
\[
\Pr(Z^\pi > \beta) \le (1-\epsilon) e^{-1}.
\]

\textbf{Case 2: Heavy Tail ($0 < \nu < 1$).} 
The mean of the GPD is $\mathbb{E}[Y] = \sigma/(1-\nu)$.
The constraint $\mathrm{CVaR}_\epsilon(Z^\pi) \le \beta$ implies:
\[
u + \frac{\sigma}{1-\nu} \le \beta \implies \beta - u \ge \frac{\sigma}{1-\nu}.
\]
The survival function is $\Pr(Y > y) = \left(1 + \frac{\nu y}{\sigma}\right)^{-1/\nu}$ for $y \ge 0$.
Substituting the minimal gap $y = \frac{\sigma}{1-\nu}$:
\[
\Pr(Z^\pi > \beta) = (1-\epsilon) \left(1 + \frac{\nu(\beta - u)}{\sigma}\right)^{-1/\nu}
          \le (1-\epsilon) \left(1 + \frac{\nu}{\sigma} \cdot \frac{\sigma}{1-\nu}\right)^{-1/\nu}.
\]
Simplifying the bracketed term to $1/(1-\nu)$, we obtain:
\[
\Pr(Z^\pi > \beta) \le (1-\epsilon) \left( \frac{1}{1-\nu} \right)^{-1/\nu} = (1-\epsilon)(1-\nu)^{1/\nu}.
\]
\end{proof}

This theorem complements our earlier results on approximation error.  In Theorem~\ref{thm:cvar_error} and Corollary~\ref{cor:safety_bound}, we established that by minimizing the quantile regression error $\delta$, SL-SAC ensures that the approximated CVaR constraints effectively bound the true CVaR (i.e., $\mathrm{CVaR}_\epsilon(Z^\pi) \lesssim \beta$). Theorem~\ref{thm:cvar_gpd_violation} extends this guarantee to the physical world: provided that the true cost distribution exhibits standard heavy-tailed behavior (approximated by GPD) and the CVaR constraint is satisfied, the probability of constraint violation ($\Pr(Z^\pi > \beta)$) is strictly bounded.

\begin{figure*}[t]
    \centering
    % --- Row 2 ---
    \begin{subfigure}[b]{0.48\textwidth}
        \centering
        \includegraphics[width=\linewidth]{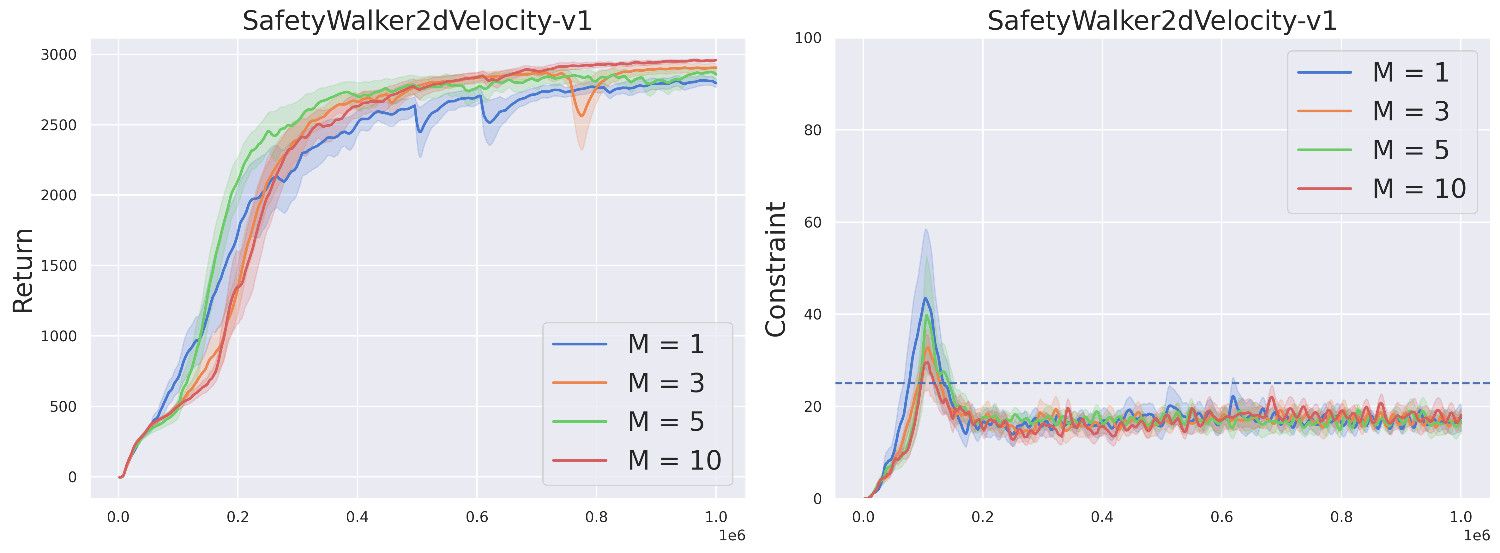}
        \caption{SafetyWalker2dVelocity-v1}
        \label{fig:walker_ensemble}
    \end{subfigure}
    \hfill
    \begin{subfigure}[b]{0.48\textwidth}
        \centering
        \includegraphics[width=\linewidth]{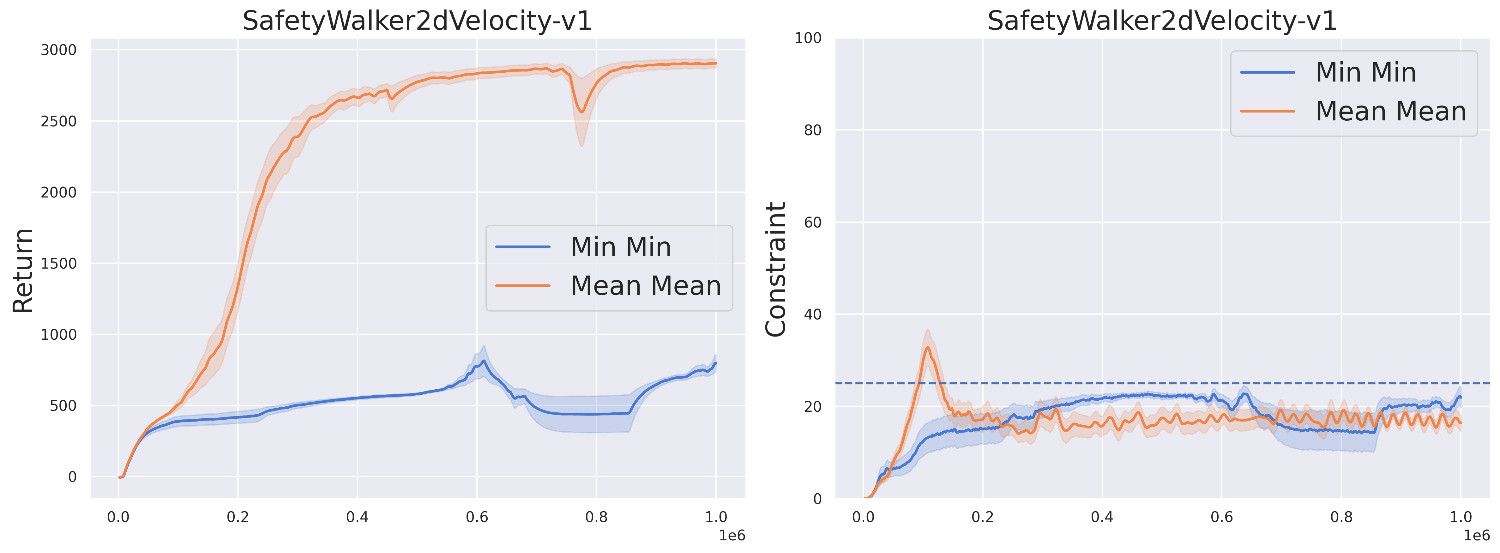}
        \caption{SafetyWalker2dVelocity-v1}
        \label{fig:walker_aggregation}
    \end{subfigure}
    
    \caption{\textbf{Ensemble size and aggregation strategy ablations.} (Left) Performance for $M \in \{1,3,5,10\}$ (ensemble size). (Right) Comparison of Min-Min vs. Mean-Mean aggregation.}
    \label{fig:ensemble_ablation}
\end{figure*}

\section{Ablation Studies}
\label{app:ablation_studies}

\subsection{Effect of Ensemble Size}
\label{subsec:ensemble_ablation}
We analyze the sensitivity of SL-SAC to the ensemble size $M$, evaluating configurations with $M \in \{1, 3, 5, 10\}$ on SafetyWalker2dVelocity-v1. As shown in Figure~\ref{fig:walker_ensemble}, increasing $M$ from 1 to 3 yields clear benefits. Beyond $M=3$, the curves for $M=5$ and $M=10$ exhibit very similar asymptotic performance and variance, indicating diminishing returns from larger ensembles while incurring higher computational cost. Consequently, we adopt $M=3$ as a practical trade-off between performance and efficiency in all main experiments.

\subsection{Ensemble Aggregation Strategy}
We examine two strategies for aggregating the ensemble of reward critics in both target construction and policy optimization. Recall that SL-SAC uses $M$ pairs of twin critics, where each pair $(Q^{(2m-1)}_\phi, Q^{(2m)}_\phi)$ computes a clipped Q-value via $\min(Q^{(2m-1)}_\phi, Q^{(2m)}_\phi)$. The aggregation strategy determines how these $M$ clipped values are combined:

\textbf{Mean-Mean:} The target for reward critic training uses the mean across pairs,
\begin{equation}
    y^{\text{reward}} = r + \gamma \mathbb{E}_{a' \sim \pi_{\theta'}(\cdot|s')} \left[\frac{1}{M} \sum_{m=1}^M \min(Q^{(2m-1)}_{\phi'}(s', a'), Q^{(2m)}_{\phi'}(s', a')) - \alpha \log \pi_{\theta'}(a'|s')\right],
\end{equation}
and the policy objective similarly maximizes the mean ensemble estimate $\bar{Q}_\phi(s,a) = \frac{1}{M} \sum_{m=1}^M \min(Q^{(2m-1)}_\phi, Q^{(2m)}_\phi)$.

\textbf{Min-Min:} Both the target and policy objective instead use the minimum across all pairs,
\begin{equation}
    y^{\text{reward}} = r + \gamma \left[\min_{m \in \{1,\ldots,M\}} \min(Q^{(2m-1)}_{\phi'}(s', a'), Q^{(2m)}_{\phi'}(s', a')) - \alpha \log \pi_{\theta'}(a'|s')\right].
\end{equation}

As shown in Figure~\ref{fig:walker_aggregation}, the Mean-Mean strategy outperforms Min-Min. In SafetyWalker2dVelocity-v1 Min-Min fails to learn an effective policy, remaining near the initial baseline performance throughout training. This suggests that taking the minimum across the ensemble introduces excessive pessimism that hinders exploration and policy improvement.

\begin{figure*}[t] % The * spans both columns. [t] tries to place it at top of page.
    \centering
    
    % --- Row 1 ---
    \begin{subfigure}[b]{0.48\textwidth}
        \centering
        % Replace with your first environment image
        \includegraphics[width=\linewidth]{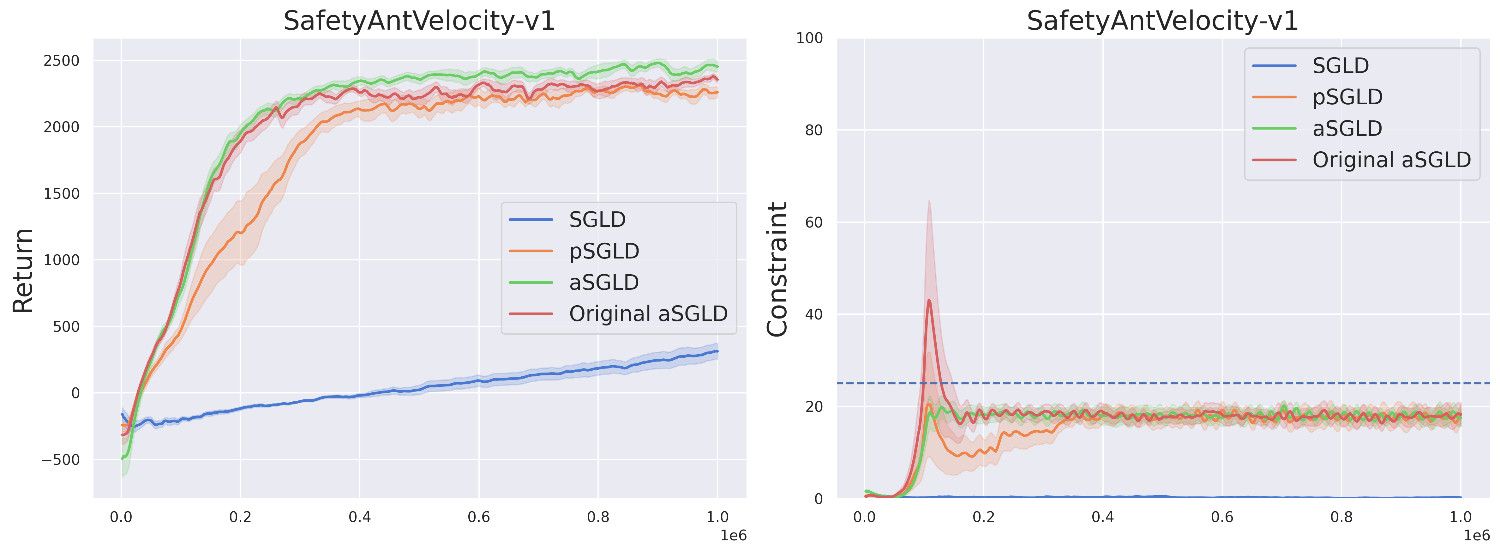} 
        \label{fig:ant_optimizers_ablation}
    \end{subfigure}
    \hfill % Adds flexible space between the two images
    \begin{subfigure}[b]{0.48\textwidth}
        \centering
        % Replace with your second environment image
        \includegraphics[width=\linewidth]{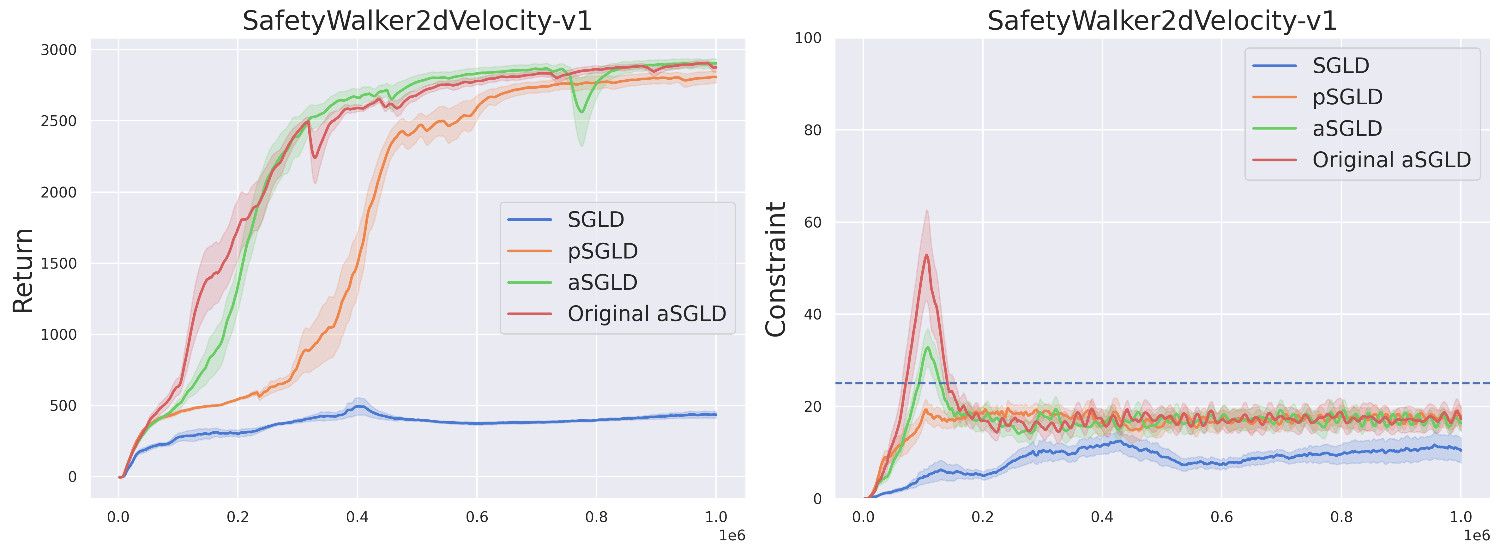}
        \label{fig:walker_optimizers_ablation}
    \end{subfigure}
    
    \caption{\textbf{Ablation on SGLD variants for reward critic optimization.} Comparison of vanilla SGLD, pSGLD (RMSProp preconditioner), aSGLD with isotropic noise, and fully preconditioned aSGLD.}
    \label{fig:ablation_optimizer}
\end{figure*}

\subsection{Noise Preconditioning in aSGLD}
\label{subsec:noise_types_ablation}
SL-SAC optimizes the ensemble of reward critics using an adaptive SGLD variant. A critical design choice is whether the injected Gaussian perturbation should be scaled by the preconditioner (anisotropic noise) or remain isotropic. To isolate the impact of this choice, we compare the aSGLD update rule used in SL-SAC with three standard baselines while keeping the actor and cost critic updates fixed.

\paragraph{Optimizer Variants.}
We analyze the update rules in the general form:
\begin{equation}
    \phi_{k+1} \leftarrow \phi_k - \eta \cdot \Delta(\phi_k) + \sqrt{2\eta T^{-1}} \cdot \Sigma(\phi_k) \, \xi_k, \quad \xi_k \sim \mathcal{N}(0, I_d),
    \label{eq:general_sgld}
\end{equation}
where $\Delta(\phi_k)$ represents the drift term and $\Sigma(\phi_k)$ determines the noise structure. We define $g_k = \nabla_\phi \mathcal{L}^{\text{reward}}(\phi_k)$ as the gradient, and $v_k$ as the exponential moving average of squared gradients. The preconditioner is denoted as $\zeta_k = \text{diag}(\sqrt{v_k + \varepsilon I})$. The variants are defined as follows:

\begin{description}
    \item[Vanilla SGLD:] Uses raw gradients and isotropic noise.
    \begin{equation}
        \Delta(\phi_k) = g_k, \quad \Sigma(\phi_k) = I_d.
    \end{equation}

    \item[pSGLD (RMSProp):] Uses a preconditioned drift but maintains isotropic noise.
    \begin{equation}
        \Delta(\phi_k) = \zeta_k^{-1} g_k, \quad \Sigma(\phi_k) = I_d.
    \end{equation}

    \item[Fully Preconditioned aSGLD:] Applies the preconditioner to both drift and noise, approximating sampling from a warped geometry.
    \begin{equation}
        \label{eq:original_asgld}
        \Delta(\phi_k) = \zeta_k^{-1} m_k, \quad \Sigma(\phi_k) = \zeta_k^{-1/2}.
    \end{equation}
    Note that $m_k$ is the Adam-style momentum of the gradient.

    \item[SL-SAC aSGLD:] Utilizes an adaptive preconditioner in the drift for fast convergence, but retains isotropic noise to ensure broad exploration of the posterior.
    \begin{equation}
        \Delta(\phi_k) = g_k + a(m_k \oslash \zeta_k), \quad \Sigma(\phi_k) = I_d.
    \end{equation}
\end{description}

We evaluate these variants on the SafetyAntVelocity-v1 and SafetyWalkerVelocity-v1 task using identical hyperparameters (Figure~\ref{fig:ablation_optimizer}). The results demonstrate that the SL-SAC aSGLD variant achieves the most stable learning dynamics. While preconditioning the drift (as in pSGLD and SL-SAC) is essential for avoiding the poor convergence seen in Vanilla SGLD, preconditioning the noise (Fully Preconditioned aSGLD) leads to transient cost spikes and instability. 

\begin{figure}[t]
    \centering
    % First Plot (Ant)
    \begin{minipage}{0.48\textwidth}
        \centering
        \includegraphics[width=\linewidth]{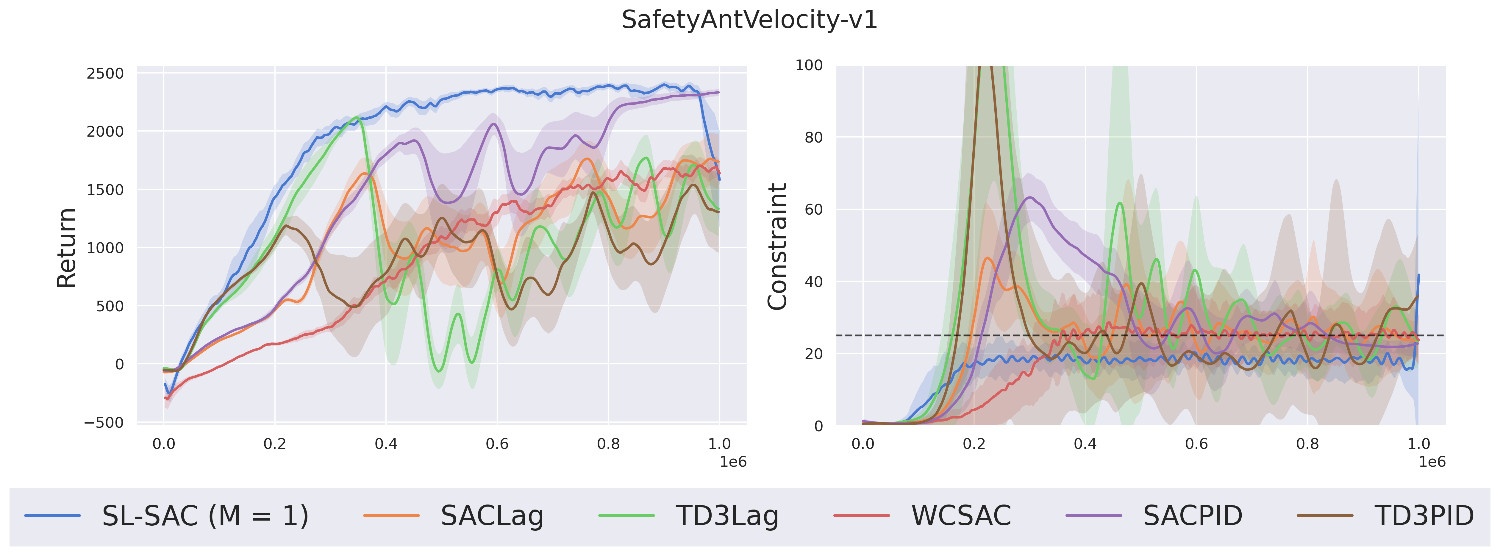} % Replace with actual filename
    \end{minipage}
    \hfill
    % Second Plot (Walker)
    \begin{minipage}{0.48\textwidth}
        \centering
        \includegraphics[width=\linewidth]{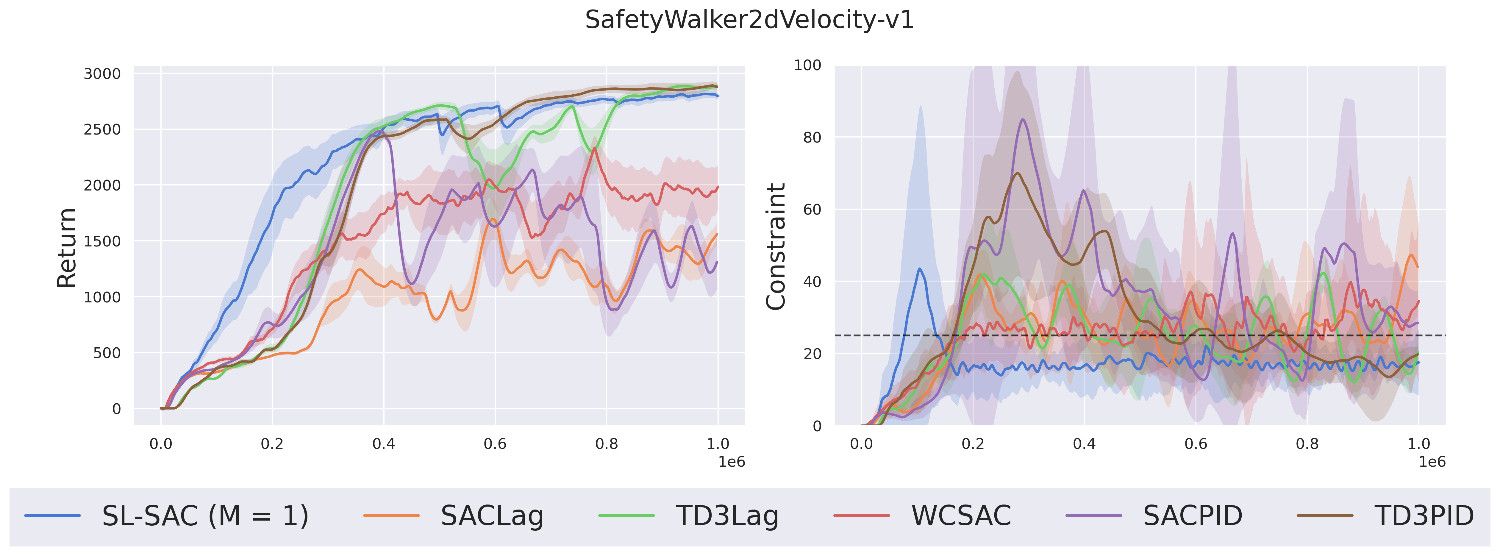} % Replace with actual filename
    \end{minipage}
    
    \caption{\textbf{Performance with $M=1$.} We compare SL-SAC restricted to a single twin-critic pair against standard baselines (which also use $M=1$).}
    \label{fig:m1_comparison}
\end{figure}

\subsection{Controlling for Model Capacity ($M=1$)}
\label{subsec:m1_capacity_ablation}

While we established the benefits of increasing the ensemble size in Appendix~\ref{subsec:ensemble_ablation}, it is crucial to verify that the performance gains of SL-SAC are not merely artifacts of increased model capacity. To isolate the algorithmic contributions, we evaluate SL-SAC using a single twin-critic pair ($M=1$), matching the exact parameter count of the standard baselines. As illustrated in Figure~\ref{fig:m1_comparison}, even with restricted capacity, SL-SAC ($M=1$)  outperforms in terms of asymptotic return while maintaining tighter constraint satisfaction. This confirms that the improvements are driven by the robust aSGLD optimization and distributional risk control mechanisms, rather than simply scaling the critic architecture.

While we analyzed the benefits of increasing the ensemble size in Appendix~\ref{subsec:ensemble_ablation}, it is important to verify that the performance of SL-SAC is not driven solely by increased model capacity. To isolate the algorithmic contributions, we evaluate a variant of SL-SAC using a single twin-critic pair ($M=1$), matching the parameter count of standard baselines. As illustrated in Figure~\ref{fig:m1_comparison}, even with restricted capacity, SL-SAC ($M=1$) remains highly effective, achieving asymptotic returns comparable to or exceeding baselines while upholding safety constraints. These results indicate that our method's efficacy stems primarily from the aSGLD optimization and distributional risk control mechanisms.

\begin{figure}[t]
    \centering
    \begin{minipage}{0.325\textwidth}
        \includegraphics[width=\linewidth]{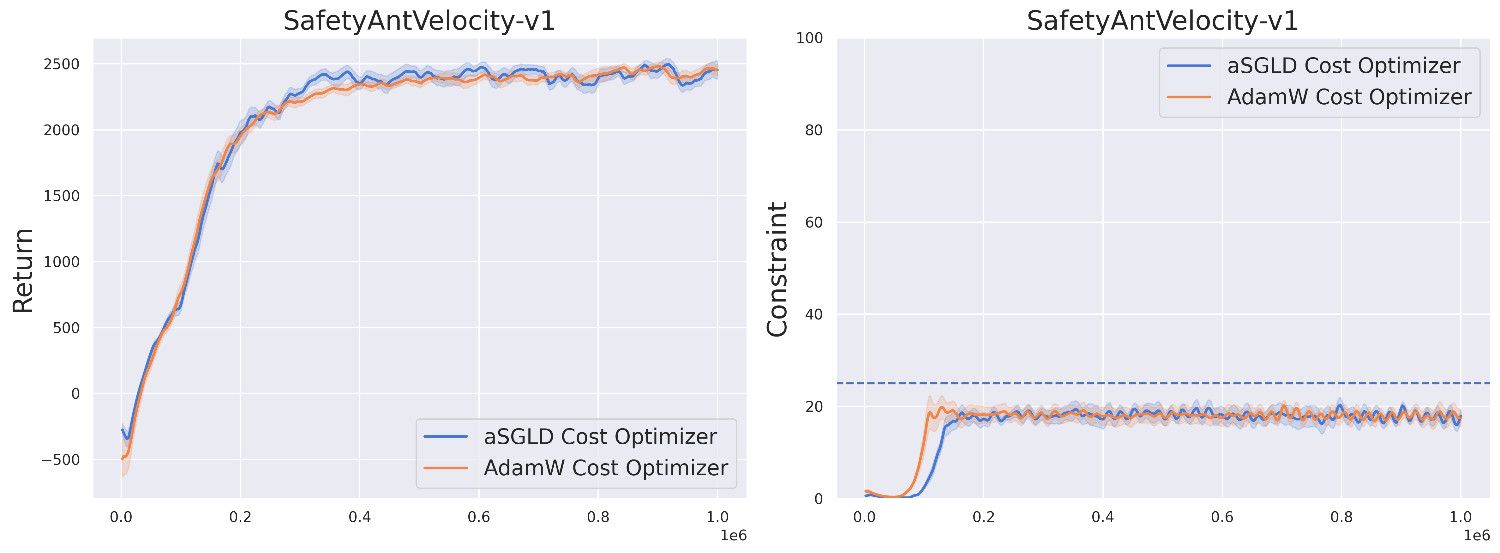}
    \end{minipage}
    \hfill
    \begin{minipage}{0.325\textwidth}
        \includegraphics[width=\linewidth]{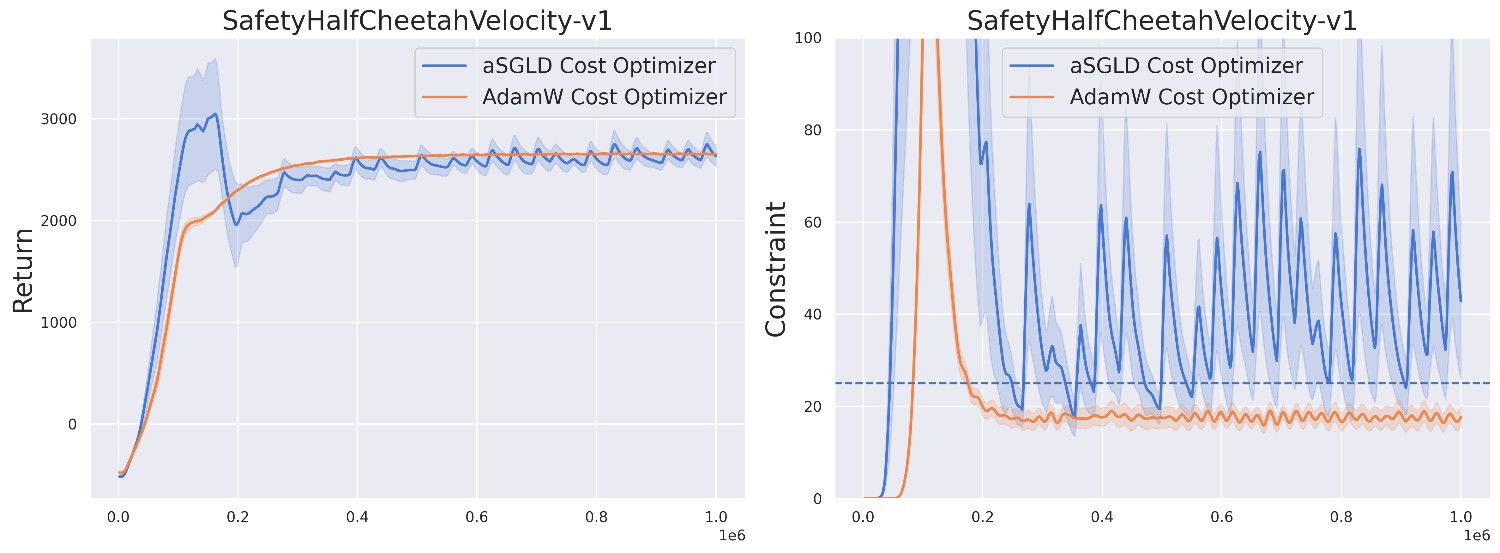}
    \end{minipage}
    \hfill
    \begin{minipage}{0.325\textwidth}
        \includegraphics[width=\linewidth]{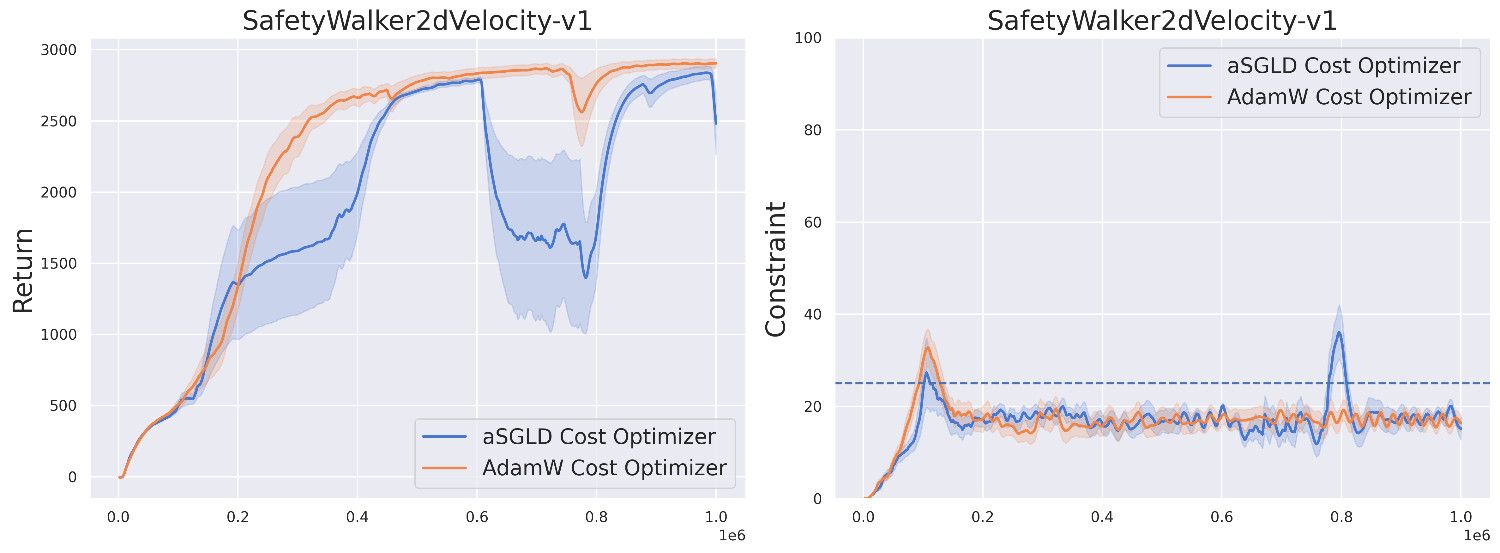}
    \end{minipage}
    \caption{Ablation of Cost Critic Optimizer (AdamW vs. aSGLD).}
    \label{fig:cost_optimizer_ablation}
\end{figure}

\subsection{Optimizer Selection for Distributional Cost Critic}
\label{app:cost_optimizer_ablation}

While we employ aSGLD for the reward critic ensemble, the distributional cost critic utilizes AdamW. This asymmetric design reflects their distinct functions: the reward critic drives exploration through posterior sampling, while the cost critic requires precise, stable risk estimation for reliable safety constraints. 

Figure~\ref{fig:cost_optimizer_ablation} presents training curves for three velocity environments. The results indicate that training the cost critic with aSGLD leads to less stable learning dynamics. In SafetyWalker2dVelocity-v1, the aSGLD variant exhibits increased volatility in returns and fails to maintain consistent safety performance in later training stages. Similarly, in SafetyHalfCheetahVelocity-v1, cost estimates show larger oscillations and slower convergence compared to the AdamW optimizer.

This instability likely stems from the distinct objectives of the two critics. The cost critic requires precise gradient updates to accurately regress IQN tail quantiles for CVaR, while aSGLD's parameter-space noise disrupts this process. Conversely, the reward critic leverages this stochasticity for posterior sampling to guide exploration. Therefore, we retain AdamW for stable cost estimation while reserving aSGLD for the reward critic.

\begin{table*}[t]
    \centering
    \caption{Hyperparameters}
    \label{tab:hyperparameters}
    \resizebox{\textwidth}{!}{%
    \begin{tabular}{lcccccc}
        \toprule
        \textbf{Hyperparameter} & \textbf{SACLag} & \textbf{TD3Lag} & \textbf{SACPID} & \textbf{TD3PID} & \textbf{WCSAC} & \textbf{SL-SAC (Ours)} \\
        \midrule
        Number of hidden layers & 2 & 2 & 2 & 2 & 2 & 2 \\
        Number of hidden nodes & 256 & 256 & 256 & 256 & 256 & 256 \\
        Activation function & ReLU & ReLU & ReLU & ReLU & ReLU & ReLU \\
        \midrule
        Batch Size & 256 & 256 & 256 & 256 & 256 & 256 \\
        Replay Buffer Size & $10^6$ & $10^6$ & $10^6$ & $10^6$ & $10^6$ & $10^6$ \\
        Discount factor ($\gamma$, reward) & 0.99 & 0.99 & 0.99 & 0.99 & 0.99 & 0.99 \\
        Discount factor ($\gamma_C$, cost) & 0.99 & 0.99 & 0.99 & 0.99 & 0.99 & 0.99 \\
        \midrule
        Optimizer & Adam & Adam & Adam & Adam & Adam & aSGLD \\
        Optimizer parameters ($\beta_1, \beta_2$) & (0.9, 0.999) & (0.9, 0.999) & (0.9, 0.999) & (0.9, 0.999) & (0.9, 0.999) & (0.9, 0.999) \\
        Inverse temperature ($T^{-1}$) & -- & -- & -- & -- & -- & $10^{-8}$ \\
        \midrule
        Number of twin reward critics & 1 & 1 & 1 & 1 & 1 & 3 \\
        Steps per epoch & 2000 & 2000 & 2000 & 2000 & 2000 & 2000 \\
        \midrule
        Quantile Embedding Dim & -- & -- & -- & -- & 64 & 64 \\
        Number of quantiles ($N$) & -- & -- & -- & -- & 32 & 32 \\
        CVaR risk level ($\epsilon$) & -- & -- & -- & -- & 0.5 & 0.5 \\
        \bottomrule
    \end{tabular}
    }
\end{table*}

\section{Experiment}
\label{sec:experimental_setup}

\subsection{Hyperparameters}
\label{subsec:hyperparameters}

All experiments were conducted on an NVIDIA V100 GPU with PyTorch 2.6 and CUDA 12.4.

To ensure stable Langevin dynamics without extensive hyperparameter tuning, we adopt the inverse temperature $T^{-1}$, bias factor $a$, and gradient clipping mechanism from LSAC~\citep{ishfaq2025langevin}. Specifically, we clip the combined gradient and adaptive bias term $\nabla_{\phi} L_{Q}(\phi) + a\zeta_{\phi}$ at threshold $c = 0.7$ to prevent gradient explosion during reward critic updates.

For the distributional cost critic, we use the Quantile Huber Loss with threshold $\kappa = 1$. The ensemble size $M$ and CVaR risk level $\epsilon$ were selected based on the ablation studies in Appendix~\ref{app:ablation_studies}. Standard RL hyperparameters (learning rates, buffer size, discount factors) follow the default configurations from the Omnisafe benchmark~\citep{ji2024omnisafe}.

Training proceeds in two warmup phases: a 5,000-step general warmup for data collection, followed by a 100,000-step multiplier warmup during which $\lambda$ remains fixed. After this period, $\lambda$ is updated at every timestep according to Equation~\ref{eq:lambda_update}.

\subsection{Environments}
Safety Gymnasium environments provide continuous control tasks where agents must balance task performance with safety constraints. Figure~\ref{fig:safety_mujoco_envs} illustrates the different environments used in our evaluation.

\textbf{Velocity Tasks.} In Velocity tasks (e.g., \texttt{AntVelocity}, \texttt{Walker2dVelocity}), agents maximize forward speed subject to a static velocity limit. A unit cost is incurred at each timestep where this threshold is exceeded. These environments simulate kinematic constraints in robotics, where adherence to operational speed limits is critical to prevent mechanical instability.

\textbf{Navigation Tasks.} We also evaluate on navigation tasks (e.g., \texttt{PointGoal}, \texttt{CarCircle}), where agents must navigate constrained 2D maps with static hazards. Circle tasks incur a unit cost on wall contact, while goal tasks impose a cost proportional to penetration depth when entering hazard regions. These tasks introduce complex spatial constraints and non-holonomic dynamics, analogous to collision avoidance in autonomous driving.

\begin{figure}[t]
    \centering
    \begin{subfigure}[b]{0.24\textwidth}
        \centering
        \includegraphics[width=\textwidth]{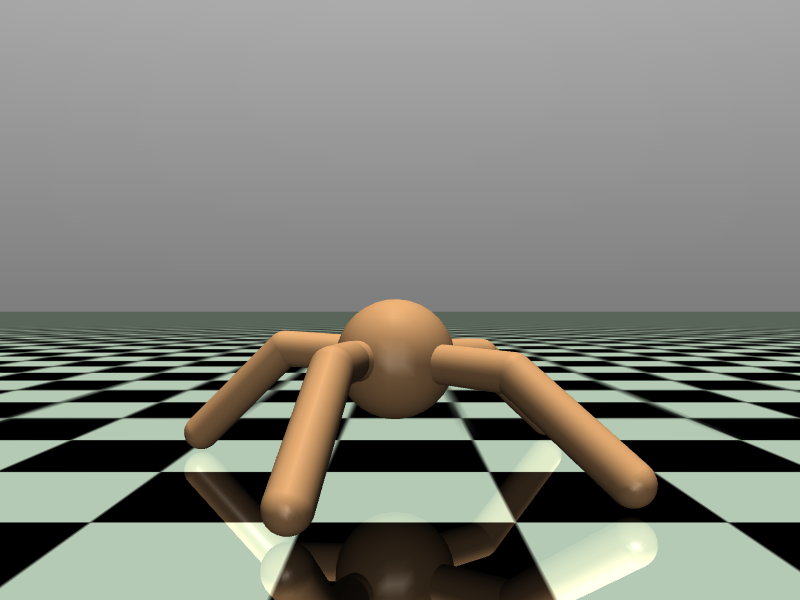}
        \caption{Ant}
        \label{fig:ant_env}
    \end{subfigure}
    \hfill
    \begin{subfigure}[b]{0.24\textwidth}
        \centering
        \includegraphics[width=\textwidth]{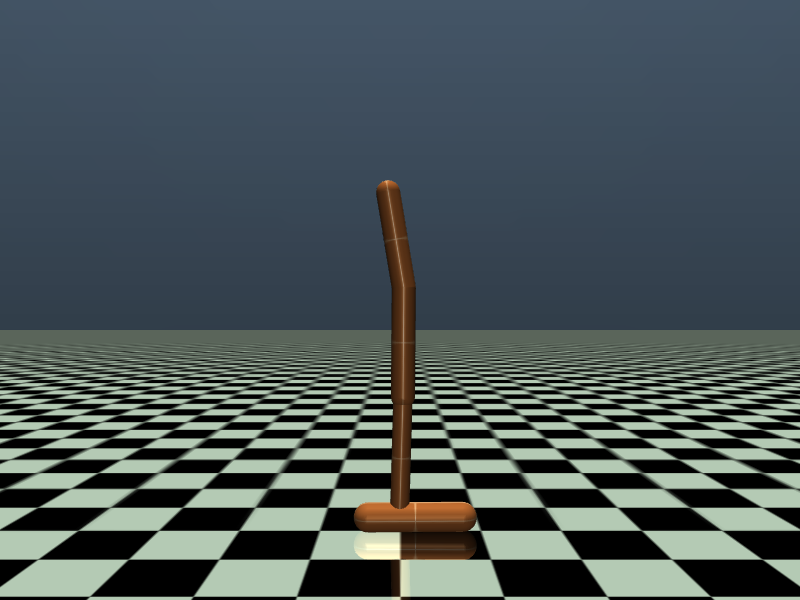}
        \caption{Hopper}
        \label{fig:hopper_env}
    \end{subfigure}
    \hfill
    \begin{subfigure}[b]{0.24\textwidth}
        \centering
        \includegraphics[width=\textwidth]{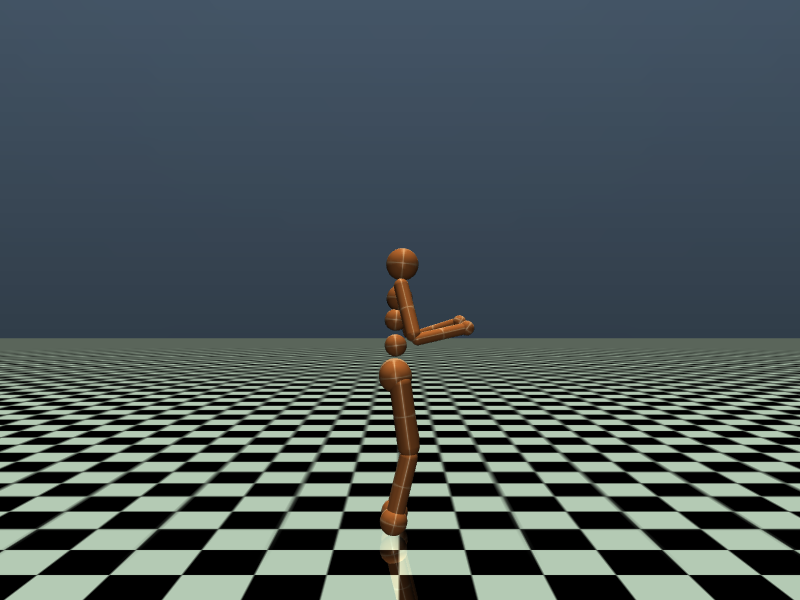}
        \caption{Humanoid}
        \label{fig:humanoid_env}
    \end{subfigure}
    \hfill
    \begin{subfigure}[b]{0.24\textwidth}
        \centering
        \includegraphics[width=\textwidth]{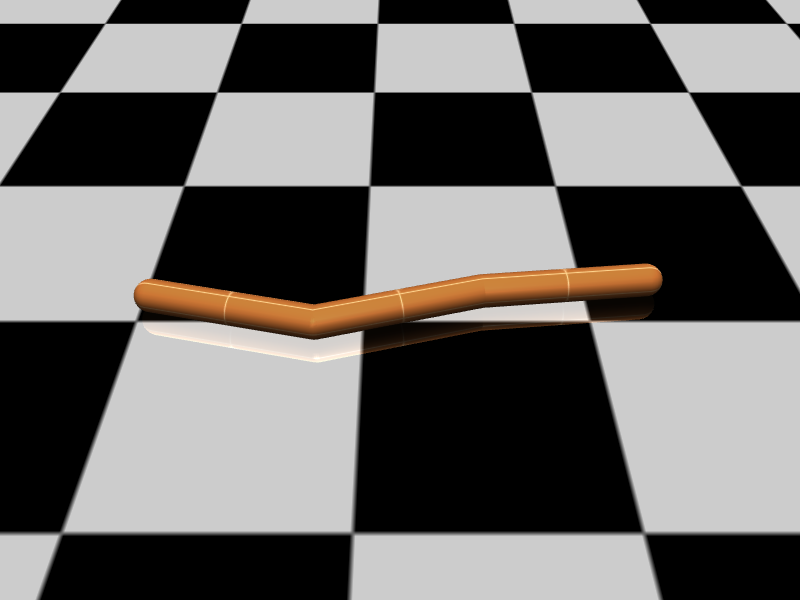}
        \caption{Swimmer}
        \label{fig:swimmer_env}
    \end{subfigure}
    \caption{Safety MuJoCo velocity-constrained locomotion environments.}
    \label{fig:safety_mujoco_envs}
\end{figure}

\begin{table*}[h]
\centering
\caption{Average Training time (hours) for 1M steps on Safety-Gymnasium tasks.}
\label{tab:training_time}
\begin{tabular}{lcccccc}
\toprule
\textbf{Environment} & \textbf{SACLag} & \textbf{SACPID} & \textbf{TD3Lag} & \textbf{TD3PID} & \textbf{WCSAC} & \textbf{SL-SAC} \\
\midrule
SafetyAntVelocity-v1 & 5.56 & 4.79 & 6.03 & 5.98 & 6.28 & 6.57 \\
SafetyWalker2dVelocity-v1 & 6.03 & 5.90 & 6.09 & 6.00 & 5.94 & 6.22 \\
\bottomrule
\end{tabular}
\end{table*}

\subsection{Training Time}
\label{subsec:computational_efficiency}

We measure the training time (in hours) for 1M environment steps on two representative Safety-Gymnasium tasks. All experiments were conducted on a single Nvidia V100 GPU, with times averaged over 5 seeds. Table~\ref{tab:training_time} reports the wall-clock training time for each algorithm.

SL-SAC introduces limited overhead which stems from the ensemble of reward critics and the aSGLD optimizer, which are essential for the improved safety and robust value estimation. 

\begin{figure*}[h] % The * spans both columns. [t] tries to place it at top of page.
    \centering
    
    \begin{subfigure}[b]{0.48\textwidth}
        \centering
        % Replace with your first environment image
        \includegraphics[width=\linewidth]{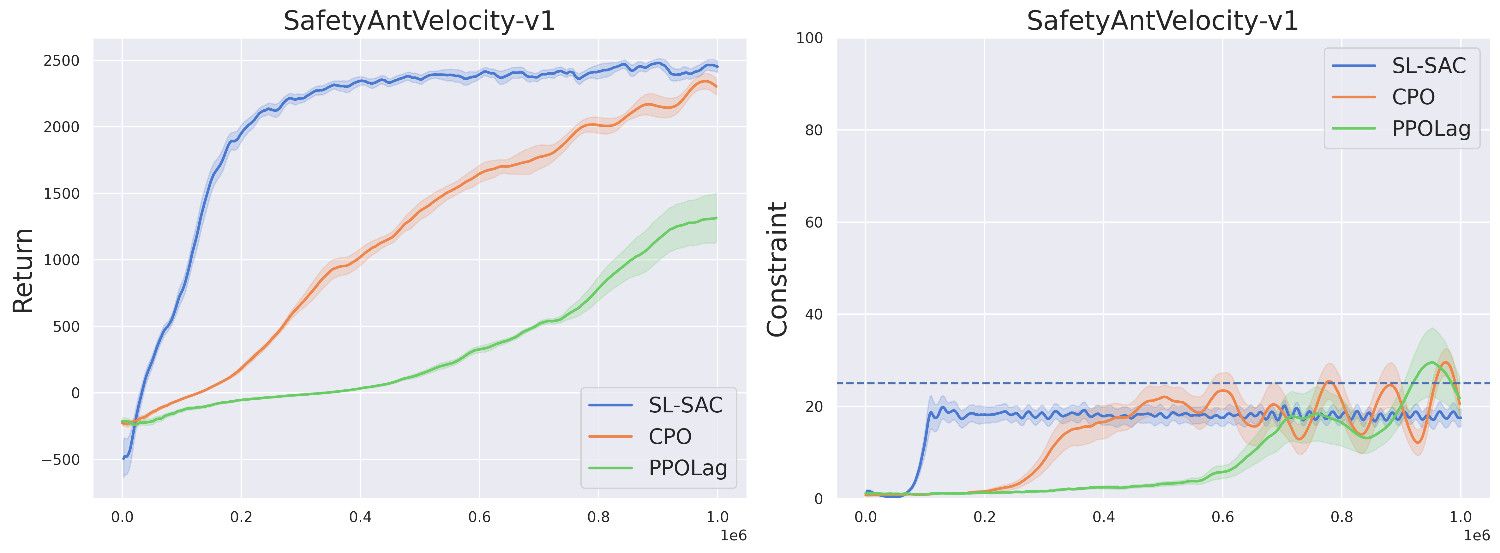} 
        \label{fig:ant_slsac_more_exps}
    \end{subfigure}
    \hfill % Adds flexible space between the two images
    \begin{subfigure}[b]{0.48\textwidth}
        \centering
        % Replace with your second environment image
        \includegraphics[width=\linewidth]{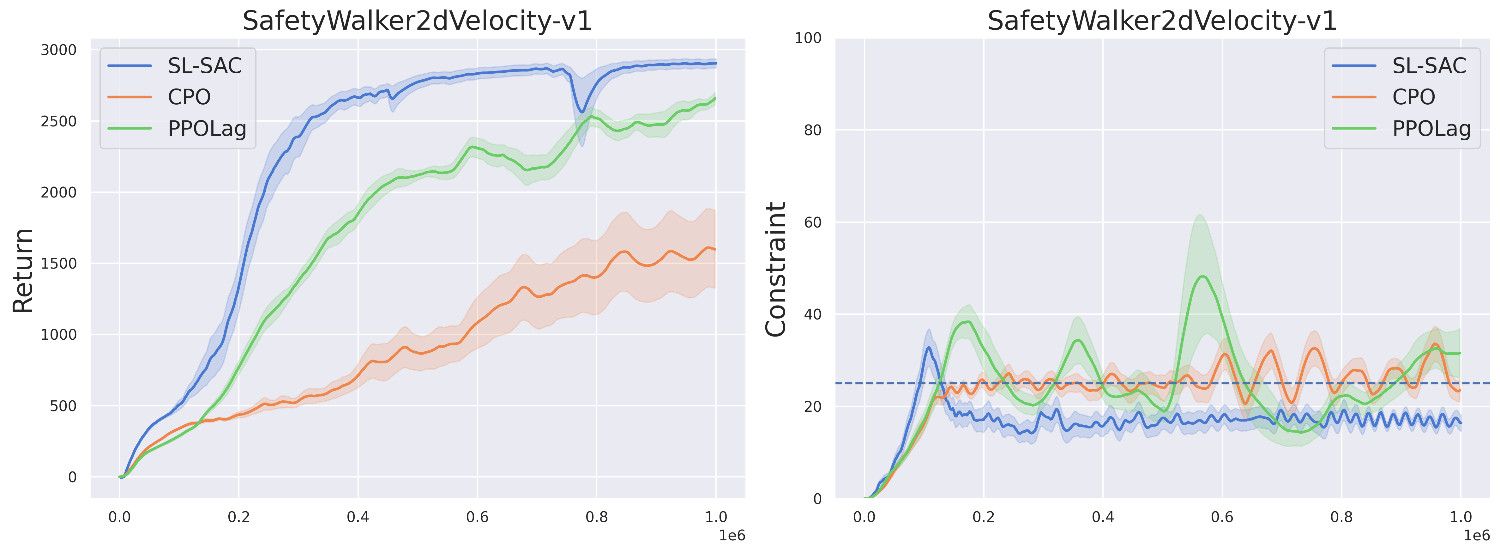}
        \label{fig:walker_slsac_more_exps}
    \end{subfigure}
    
    \caption{Training curves for additional baselines. Episode return (left) and cost (right), averaged over 5 seeds (solid line: mean, shaded area: standard deviation). The dashed line indicates the constraint threshold $\beta=25$.}
    \label{fig:additional_experiments}
\end{figure*}

\subsection{Comparison with On-Policy Baselines}
\label{subsec:on_policy_comparison}
To further validate the efficiency of SL-SAC, we compare it against prominent on-policy SafeRL algorithms: Constrained Policy Optimization (CPO)~\citep{achiam2017constrained} and PPO-Lagrangian (PPOLag)~\citep{ray2019benchmarking}. CPO is a trust-region method that analytically solves for policy updates within a feasible trust region to satisfy constraints at every step, often considered a theoretical benchmark for safety. PPOLag adapts the popular Proximal Policy Optimization (PPO) algorithm by incorporating a Lagrangian multiplier to penalize constraint violations dynamically.

The experimental results (Figure~\ref{fig:additional_experiments}) show a difference in sample efficiency between the approaches. SL-SAC reaches high task performance with fewer environment interactions than the on-policy baselines. In terms of safety, while CPO and PPOLag generally satisfy the constraints, they exhibit more variation around the cost threshold, whereas SL-SAC maintains a more consistent safety profile with lower variance over the learning curve.